\theoremstyle{plain}
\newtheorem{theorem}{Theorem}[section]
\newtheorem{proposition}[theorem]{Proposition}
\newtheorem{lemma}[theorem]{Lemma}
\theoremstyle{definition}
\newtheorem{definition}[theorem]{Definition}
\theoremstyle{remark}
\newtheorem{remark}[theorem]{Remark}
\newcommand{\answerTODO}[1][]{\textcolor{red}{\bf [TODO]}}
\title{\textbf{Multi-Step Visual Reasoning \\ with Visual Tokens Scaling and Verification}}
\author{
    Tianyi Bai$^{1,3}$$^*$, Zengjie Hu$^2$$^*$, Fupeng Sun$^4$$^*$, Qiu Jiantao$^3$$^{\ddagger}$, Yizhen Jiang$^2$, 
    \vspace{-0.6em} \\ 
    Guangxin He$^1$, Bohan Zeng$^2$, Conghui He$^3$$^{\dagger}$, Binhang Yuan$^1$$^{\dagger}$, Wentao Zhang$^2$$^{\dagger}$
    \vspace{0.4em}\\
    $^1$HKUST, 
    $^2$Peking University,
    $^3$Shanghai AI Lab,
    $^4$Imperial College London
    \vspace{0.4em} \\
    ${}^*$ Equal contribution.\ ${}^\ddagger$ Project leader.\ ${}^\dagger$ Corresponding authors. \\
    {wentao.zhang@pku.edu.cn, biyuan@ust.hk, \{qiujiantao, heconghui\}@pjlab.org.cn}
}
\begin{document}
\setlength{\abovedisplayskip}{6pt}
\setlength{\belowdisplayskip}{6pt}
\setlength{\abovedisplayshortskip}{0pt}
\setlength{\belowdisplayshortskip}{3pt}

\maketitle

\begin{abstract}
\vspace{0.7em}
Multi-modal large language models (MLLMs) have achieved remarkable capabilities by integrating visual perception with language understanding, enabling applications such as image-grounded dialogue, visual question answering, and scientific analysis. However, most MLLMs adopt a static inference paradigm, encoding the entire image into fixed visual tokens upfront, which limits their ability to iteratively refine understanding or adapt to context during inference. This contrasts sharply with human perception, which is dynamic, selective, and feedback-driven.
In this work, we introduce a novel framework for inference-time visual token scaling that enables MLLMs to perform iterative, verifier-guided reasoning over visual content. We formulate the problem as a Markov Decision Process, involving a reasoner that proposes visual actions and a verifier—trained via multi-step Direct Preference Optimization (DPO)—that evaluates these actions and determines when reasoning should terminate. To support this, we present a new dataset, VTS, comprising supervised reasoning trajectories (VTS-SFT) and preference-labeled reasoning comparisons (VTS-DPO).
Our method significantly outperforms existing approaches across diverse visual reasoning benchmarks, offering not only improved accuracy but also more interpretable and grounded reasoning processes. These results demonstrate the promise of dynamic inference mechanisms for enabling fine-grained, context-aware visual reasoning in next-generation MLLMs.
Code and data are publicly released at 
{\url{https://github.com/Open-DataFlow/vts-v}}.
\end{abstract}

\section{Introduction}
\begin{figure}[t]
    \centering
    \includegraphics[width=1\textwidth]{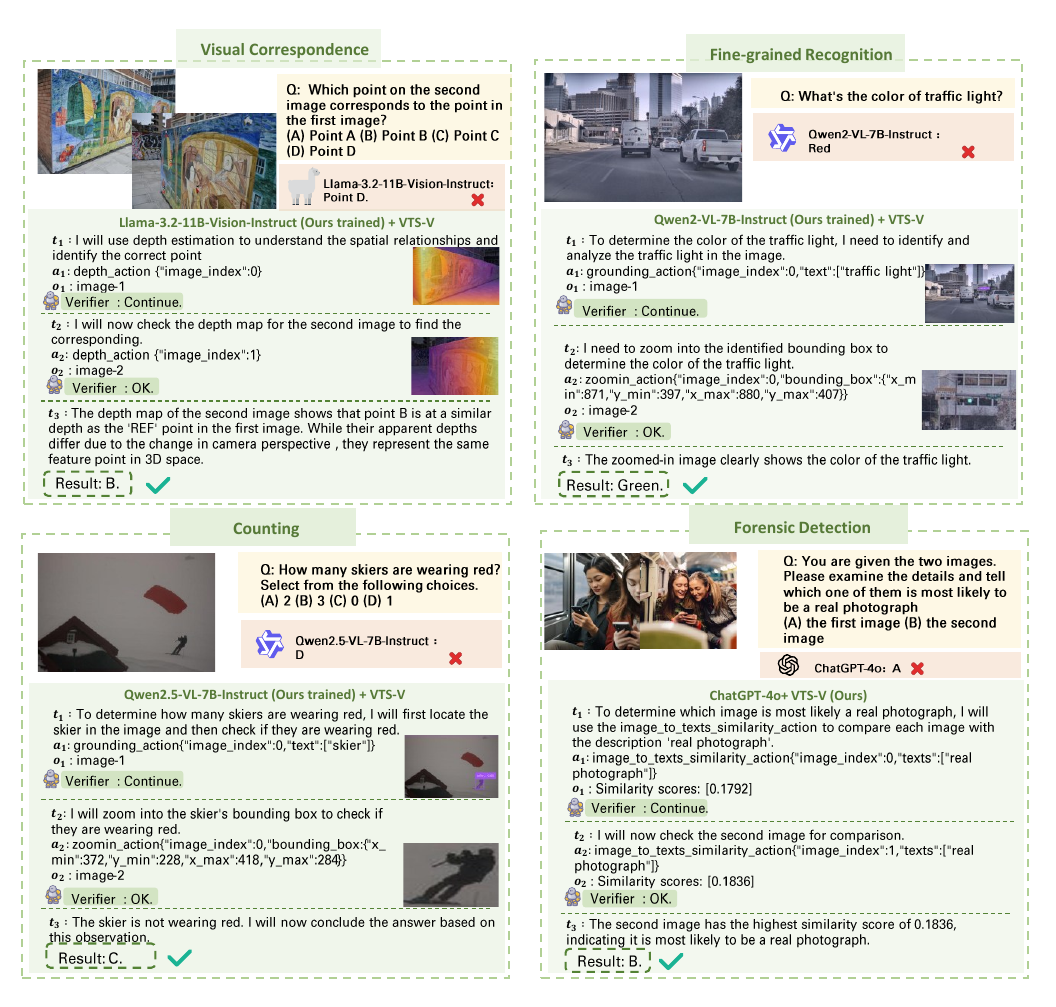}
    % \vspace{-0.25cm}
    \caption{\textbf{Iterative Visual Reasoning with VTS-V.} Our framework equips both open-source and closed-source models with dynamic visual token scaling and step-wise verification to solve complex visual tasks.  The example shows how VTS-V: (1) decomposes questions into executable steps, (2) invokes vision tools, and (3) iteratively refines answers via verifier feedback, achieving correct results. In contrast, vanilla models fail to ground detailed visual operations without token scaling, leading to incorrect answers.}
    \label{fig:main}
    % \vspace{-0.5cm}
\end{figure}

Multi-modal large language models (MLLMs) that can perceive and reason over visual content are a foundational component of modern AI systems. By extending large language models (LLMs) with visual perception capabilities, MLLMs support a wide range of applications—from image-grounded dialogue and visual question answering to robotics and scientific analysis. Yet despite their impressive generalization, one fundamental challenge remains unsolved: \textit{how can we conduct effective inference-time scaling for MLLMs to enable fine-grained, context-aware interaction with visual information}?

Current MLLMs typically adopt a \textit{static inference paradigm}—processing the whole image into a static fixed set of visual tokens in a single step, and conducting all reasoning based solely on this static embedding. This approach limits the model’s ability to recover from ambiguity, occlusion, or missing detail: once the initial representation is formed, there is no mechanism to query the image again or refine visual understanding. In contrast, \textit{human perception is inherently dynamic}—we iteratively inspect regions, zoom in, and seek new visual evidence as reasoning unfolds. Bridging this gap between static MLLM inference and dynamic human reasoning is the central problem addressed in this work.

Multi-step visual reasoning is critical for robust AI systems. Many tasks require identifying small objects, interpreting text in images, or reasoning about spatial relations—activities that benefit from an iterative, context-sensitive exploration of visual content. Furthermore, dynamic scaling enables more efficient reasoning by focusing computational resources on the most relevant parts of the image, rather than uniformly processing all pixels or tokens. Without this flexibility, existing models exhibit degraded performance on benchmarks such as BLINK~\citep{fu2025blink}, V-Star Bench~\citep{v*}, and MMVP~\citep{tong2024eyes}, which are designed to probe deeper visual understanding.

The core technical challenge lies in the absence of an expressive framework for flexible, inference-time visual exploration. Existing approaches either construct improved visual reasoning datasets for fine-tuning~\citep{guo2024mammoth,llavacot}, which still rely heavily on the pretrained model’s image-text alignment quality, or attempt to enhance inference through text token scaling~\citep{chen2023shikra} and the use of external vision tools~\citep{hu2024visual, gupta2023visual, suris2023vipergpt,v*}. However, these methods are limited in scope, focusing narrowly on static token expansion or employing a small set of fixed visual tools in predefined pipelines. As a result, these methods fail to give the model agency in choosing what to observe next, how to focus visual attention, or when to stop reasoning. What’s needed is a unified framework that allows the model to take structured, interpretable visual actions—guided by feedback—while remaining grounded in the image content.

\textbf{Contributions.}
To address this, we propose a novel framework for inference-time visual token scaling, enabling MLLMs to engage in \textit{iterative, verifier-guided reasoning} over images. Our contributions are threefold:
\begin{itemize}[topsep=5pt, leftmargin=*]
\vspace{-0.25cm}
    \item \textbf{Expressive and theoretically grounded framework.} We formulate visual reasoning as a Markov Decision Process (MDP) with two key components: a \emph{reasoner} that proposes visual actions, and a \emph{verifier}, trained via multi-step Direct Preference Optimization (DPO), which evaluates action quality and terminates reasoning when appropriate. We prove that our reasoner and verifier cooperation system ensures alignment between reasoning actions and visual content, while guaranteeing a bounded number of steps through an early stopping mechanism.
    \item \textbf{A new dataset for tool-augmented visual reasoning.} We introduce a two-part dataset: \textit{VTS-SFT} (supervised reasoning trajectories with tool use), and \textit{VTS-DPO} (preference-labeled reasoning pairs). These resources enable effective training of both the reasoner and verifier, supporting dynamic visual interaction with multi-step grounding.
    \item \textbf{Comprehensive evaluation and state-of-the-art results.} Our experiments span a variety of vision-language tasks demanding multi-step reasoning. Across these scenarios, our approach significantly outperforms strong baselines, including models augmented with limited tool use and those employing chain-of-thought prompting. We observe not only accuracy improvements but also more interpretable reasoning traces, as the model’s step-by-step process explicitly justifies each answer with visual evidence. These results underscore the effectiveness of inference-time visual token scaling: by enabling an AI to look deeper into images in a controlled, stepwise fashion, we achieve new state-of-the-art performance on tasks that previously stymied conventional MLLMs.
    \vspace{-0.25cm}
\end{itemize}

These contributions move us toward flexible, grounded, and interpretable visual reasoning in MLLMs—crucial for building AI systems that can “think with their eyes.”

\section{Related Work}
\paragraph{Visual reasoning.}

Visual reasoning in VLMs focuses on integrating visual and textual inputs to enable effective decision-making. Early approaches, such as Shikra~\citep{chen2023shikra}, applied Chain of Thought (CoT)\citep{cot} techniques to visual tasks. Meanwhile, methods like SoM\citep{som} and Scaffolding~\citep{lei2024scaffolding} improved reasoning by leveraging visual anchors, such as segmentation maps. V$^*$~\citep{v*} introduced a two-step CoT approach for high-resolution visual search, demonstrating the potential of structured reasoning in visual contexts.
More recently, efforts have been made to develop improved reasoning datasets for training visual language models (VLMs)~\citep{llavacot, guo2024mammoth}. However, most of this work primarily focuses on scaling text tokens during the visual reasoning phase. Limited attention has been given to frameworks that scale visual tokens to address complex visual reasoning tasks.

\vspace{-0.25cm}
\paragraph{Visual programming and tool-using.}
Recent research has focused on integrating visual tools with large language models (LLMs) and vision-language models (VLMs) to address complex visual tasks. These studies aim to leverage LLMs and VLMs to generate code that utilizes external vision tools for solving complex vision problems~\citep{hu2024visual,gupta2023visual,suris2023vipergpt,yang2023mm}. For instance, Visprog~\citep{gupta2023visual} and ViperGPT~\citep{suris2023vipergpt} prompt VLMs to produce single-step Python code that interacts with external vision tools, while Visual Sketchpad~\citep{hu2024visual} enhances these methods by introducing multi-step reasoning capabilities, enabling VLMs to rethink and correct execution errors. 
However, these approaches fail to address the critical aspect of iterative visual reasoning, as their reasoning steps are typically limited to only 1–2 steps. This prevents them from performing reflective refinements akin to VLMs, hindering their ability to tackle complex visual tasks that require deeper inference and multi-step adjustments.
\vspace{-0.25cm}
\paragraph{Verifier design.}
Recent advancements have leveraged verifiers to enhance language model reasoning and solution quality. Approaches include deriving reward signals for reasoning~\citep{li2023making, lightmanlet}, combining solution- and step-level verifiers for math problems~\citep{zhu2023solving}, and using graph search or Monte Carlo rollouts for rationale generation~\citep{ma2023let, wang2023math}. Training methods range from human annotations for RL with feedback~\citep{ziegler2019fine} to synthetic data for RL with AI feedback~\citep{yangrlcd}. Some treat verifiers as generative models, scoring solutions via control tokens~\citep{korbak2023pretraining} or likelihoods~\citep{liu2023improving}. Closely related is V-STaR~\citep{vstar}, which uses Direct Preference Optimization (DPO) for solution ranking. However, existing verifiers are typically designed for specific tasks and tailored training datasets. Our work utilizes multi-step DPO as a verification mechanism, supported by theoretical guarantees.

% \vspace{-0.25cm}
\section{Visual Tokens Scaling with Verification}
% \vspace{-0.25cm}
\label{sec:vtsv}
We first formally formulate the visual reasoning tasks with visual token scaling and verification.
\vspace{-0.15cm}
\subsection{Problem Formulation} 
\vspace{-0.15cm}
\label{sec:vts:formulation}
At the first step, a question-image pair $s_1$\footnote{In practice, $s_1$ will also include system prompts.} is sampled from some distribution $\mathcal{D}$ as the initial state. For each step $h\geq 2$, we conduct:

\begin{itemize}[topsep=5pt, leftmargin=*]
\vspace{-0.25cm}
\item \textbf{Planning}: the VLM observes the current state $s_h$, which is the history of the previous $h-1$ reasoning steps, i.e., $s_h = \left( s_{h-1}, t_{h-1},
a_{h-1}, o_{h-1}\right)$, and generates planning of $h$-th step $t_h$ by some distribution $p(\cdot\mid s_h)$. $t_h$ can be regarded as the text tokens and determines how to manipulate the image at the $h$-th step. 
% \vspace{-0.25cm}
\item \textbf{Action}: based on planning $t_h$, the VLM further chooses an action $a_h \in \mathcal{A}$ according to policy $\pi(\cdot\mid t_h)$, where $\mathcal{A}$ is the finite action (module) set. This means that the VLM decides which visual module to implement for a specific instruction $t_h$.
% \vspace{-0.25cm}
\item \textbf{Observation}: in response to the action, the environment then returns a visual observation ${o}_h = f_{a_h} \left(t_h\right)$ for some deterministic function $f$. Here the visual observation is assumed to be deterministic since it is the code execution result by some visual module, the depth map of an image, for example. 
\vspace{-0.25cm}
\end{itemize}

Then we transit to a new state $s_{h+1}$ and a new reasoning step begins. 
\begin{itemize}[topsep=5pt, leftmargin=*]
\vspace{-0.25cm}
\item \textbf{Verification}: after each set of planning, action and observation, a verifier $r^*$ is designed to decide whether to continue this iterative reasoning, or to stop and give out the final result.
\vspace{-0.25cm}
\end{itemize}
We regard the final planning as the final result of the initial question-image pair. 
Eventually, we will collect a 
reasoning sequence
\begin{equation}
% \begin{small}
\begin{aligned}\label{eq:reasoning path}
    \tau = \left(s_1, t^{}_1,
    a^{}_1,
    o^{}_1,  \ldots,  t^{}_{H_{\tau}}, a^{}_{H_{\tau}}, o^{}_{H_{\tau}},  t^{}_{H_{\tau} + 1}\right) = s_{H_{\tau} + 1} \cup \left\{t_{H_{\tau} + 1}\right\},
\end{aligned}
% \end{small}
\end{equation}

where $H_{\tau}$ (a random variable) is the length of the total reasoning steps for path $\tau$.  
\vspace{-0.15cm}
\subsection{Visual Reasoning with Visual Token Scaling and Verification}
\vspace{-0.15cm}
To enhance performance in visual reasoning tasks, the reasoning process will consist of two main components: a reasoner capable of generating visual reasoning steps, and a verifier that guides the reasoner in visual token scaling and determines the terminal condition.
% \vspace{-0.25cm}

\textbf{Reasoner}. The reasoner is a pre-trained VLM augmented with plug-and-play modules, which are denoted by $\texttt{R}_{\theta_0}$, where $\theta_0$ encapsulates all the parameters. The module tools can be off-the-shelf vision models (depth maps, bounding box generation, etc), table operations (add columns, select row, etc), web search engines, and Python functions (plot figures, calculation, etc).
The reasoner is supposed to present the reasoning steps in a sequential format as in equation (\ref{eq:reasoning path}), where $p(\cdot\mid s_h) = \texttt{R}_{{\theta}_{0}}(\cdot \mid s_h)$ and $\pi(\cdot\mid t_h) = \texttt{R}_{{\theta}_{0}}(\cdot \mid t_h)$. In practice, the reasoner can be: (\underline{i}) a model such as GPT-4o; or (\underline{ii}) an open-source model fine-tuned on self-crafted datasets. In this case, $\theta_0$ can be further updated to $\hat{\theta}_{\text{SFT}}$ by SFT.

\textbf{Verifier}. The verifier $r^*$ is a function that maps the reasoning trajectory generated by the reasoner to a real number. At each reasoning step $h$, the reasoner will continue to generate one additional step if the reward difference between step $h$ and step $h+1$ is no less than some predetermined positive threshold $\epsilon$.
In this paper, we utilize a verifier derived from multi-step DPO ~\citep{xiong2024building}. Specifically, we begin with a base VLM $\texttt{V}_{\phi_0}$, capable of generating reasoning sequences, where $\phi_0$ is known and encompasses all the parameters. Using multi-step DPO with preference data, $\phi_0$ is further updated to $\hat{\phi}_{\text{SDPO}}$. Consequently, the verifier can be represented by $\texttt{V}_{\phi_0}$ and $\texttt{V}_{\hat{\phi}_{\text{SDPO}}}$ jointly.

\vspace{-0.15cm}
\subsection{Reasoner and Verifier Training}
\vspace{-0.15cm}
The training stage is divided into two parts: fine-tuning the reasoner with SFT and verifier training based on DPO (answer level and multi-step level). Correspondingly, the training dataset consists $\mathcal{D}_{\text{SFT}}$ and $\mathcal{D}_{\text{DPO}}$, we will describe the details of Visual Tokens Scaling SFT and DPO training dataset construction in Section \ref{sec:dataset}.

\textbf{Fine-tune the reasoner by SFT}. Suppose we have a base reasoning model $\texttt{R}_{\theta_0}$.

\textit{SFT training}. Given a Visual Token Scaling SFT training dataset $\mathcal{D}_{\text{SFT}}$, the supervised finetuning process refers to the learning of reasoner $\texttt{R}_{\theta}$ through minimizing the following cross-entropy loss
\begin{equation*}
\hat{\theta}_{\text{SFT}}  \coloneqq  \text{argmin}_{\theta} \frac{1}{|\mathcal{D}_{\text{SFT}}|} \sum_{\tau \in \mathcal{D}_{\text{SFT}}}\frac{1}{H_{\tau} + 1} \mathcal{L}_{\text{SFT}}\left(\texttt{R}_{\theta}\left(\tau\right),\tau\right) ,
\end{equation*}
where $\mathcal{L}_{\text{SFT}}(\cdot, \cdot)$ is defined as
\begin{align}\label{eq:SFT loss}
&\mathcal{L}_{\text{SFT}}\left(\texttt{R}_{\theta}\left(\tau\right),\tau\right)
=
-\sum_{h=1}^{H_{\tau} + 1}\log\texttt{R}_{\theta}\left(t^{\tau}_h\mid s_h^{\tau}\right) - \sum_{h=1}^{H_{\tau}}\log\texttt{R}_{\theta}\left(a^{\tau}_h\mid t_h^{\tau}\right).
\end{align}
$\hat{\theta}_{\text{SFT}} $ then can be obtained by the iteration of gradient descent
\begin{align}\label{eq:gradient descent of SFT}
    \theta_{k + 1} = \theta_{k} - \alpha \frac{1}{|\mathcal{D}_{\text{SFT}}|} \sum_{\tau \in \mathcal{D}_{\text{SFT}}}\frac{1}{H_{\tau} + 1}\nabla \mathcal{L}_{\text{SFT}}\left(\texttt{R}_{\theta}\left(\tau\right),\tau\right),
\end{align}
where $\alpha$ is the learning rate and the initial parameter is $\theta_0$.

\textbf{Verifier training by DPO}. We use multi-step DPO to derive the desired verifier that aligns with human preferences, building upon a base VLM $\texttt{V}_{\phi_0}$, such as LLaVA-v1.5-7B. Suppose we are given preference pairs $\left(s_1, \tau^{w}, \tau^{l}\right)$, where $\tau^{w}$ is preferred trajectory over unpreferred trajectory $\tau^{l}$.

The key idea of multi-step DPO is to assume that $r^*$ belongs to a family of one-parameter functions $\left\{r_{\phi}(\cdot)\right\}_{\phi}$, i.e., $r^*(\cdot) = r_{\phi_{\text{SDPO}}}(\cdot)$ for some unknown ground truth parameter $\phi_{\text{SDPO}}$. Specifically, if $r_{\phi}(\cdot)$ is defined as
\begin{align}
    r_{\phi}(\tau)\nonumber  
    = &\, \eta \sum_{h=1}^{H_{\tau} + 1} \log \frac{\texttt{V}_{{\phi}}\left( t_h \mid s_h \right)}{\texttt{V}_{\phi_0}\left( t_h \mid s_h \right)}  + \eta \sum_{h=1}^{H_{\tau}} \log \frac{\texttt{V}_{{\phi}}\left( a_h \mid t_h \right)}{\texttt{V}_{\phi_0}\left( a_h \mid t_h \right)} 
    + Q(s_1), 
    \label{eq:verifier}
\end{align}
where $\eta$ is a positive constant and $Q(\cdot)$ is a fixed function that depends only on $s_1$, then the preference model $\texttt{V}_{\phi_{\text{SDPO}}}$ will align with the verifier $r_{\phi_{\text{SDPO}}}$ while remaining close to the original $\texttt{V}_{\phi_0}$ automatically, as shown by Equation~\ref{eq:KL-original} and Proposition~\ref{prop:backward induc}.

Furthermore, $\phi_{\text{SDPO}}$ can be determined by maximizing the expected likelihood under the Bradley-Terry model
\begin{equation}
% \begin{small}
\mathbb{E}_{\left(s_1, \tau^{w}, \tau^{l}\right)} \left[\mathbb{P}\left(\tau^{w} \succ \tau^{l}\right)\right],
% \end{small}
\end{equation}
where 
\begin{equation}
% \begin{small}
\mathbb{P}\left(\tau^{w} \succ \tau^{l}\right) = \sigma\left(r^*\left(\tau^{w}\right) - r^*\left(\tau^{l}\right)\right).
% \end{small}
\end{equation}

Once ${\phi}_{\text{SDPO}}$ is obtained, the verifier $r_{{\phi}_{\text{SDPO}}}$ can be used to guide the reasoner in generating the reasoning procedure. The detailed training procedure is as follows.

\textit{Multi-step DPO}. 
Define the empirical multi-step DPO loss $\mathcal{L}_{\text{SDPO}}\left(\phi, \phi_0\right)$
by equation (\ref{eq:multi-step dpo loss}).
Let 
\begin{align*}
% \begin{small}
 \hat{\phi}_{\text{SDPO}} = \text{argmin}_{\phi} \mathcal{L}_{\text{SDPO}}\left(\phi, {\phi}_{0}\right),
% \end{small}
\end{align*}
then $\hat{\phi}_{\text{SDPO}} $ can be obtained by the iteration of gradient descent
\begin{align}\label{eq:gradient descent of DPO loss}
% \begin{small}
    \phi_{k + 1} = \phi_{k} - \alpha' \sum_{\left(s_1, \tau^{w}, \tau^{l}\right) \in \mathcal{D}_{\text{DPO}}}\nabla \mathcal{L}_{\text{SDPO}}\left(\phi,{\phi}_0\right),
% \end{small}
\end{align}
where $\alpha'$ is the learning rate and the initial parameter is ${\phi_0}$.
The verifier $r^{*}$ operates on some reasoning trajectory $\tau$ could be approximated by $r_{\hat{\phi}_{\text{SDPO}}}(\tau)$.

\vspace{-0.15cm}
\subsection{Inference Algorithm with Practical Efficiency and Theoretical Guarantees}
\vspace{-0.15cm}
In this subsection, we show our algorithm during inference time and the theoritical guarantees.

Given a new test pair $s_1 \sim \mathcal{D}$, we use the reasoner $\texttt{R}_{\hat{\theta}_{\text{SFT}}}$ for scaling the inference sequence, and we use the verifier $\texttt{V}_{\hat{\phi}_{\text{SDPO}}}$ to instruct reasoner whether to continue generating the reasoning sequence. Assume 
a reasoning sequence $s_{h}$ is generated by the reasoner $\texttt{R}_{\hat{\theta}_{\text{SFT}}}$. We determine whether 
$\texttt{R}_{\hat{\theta}_{\text{SFT}}}$ need to generate one additional reasoning step 
\begin{equation}
% \begin{small}
\begin{aligned}
& t^{}_{h} \sim \texttt{R}_{\hat{\theta}_{\text{SFT}}}\left(\cdot \mid s^{}_{h}\right), \quad a^{}_{h} \sim \texttt{R}_{\hat{\theta}_{\text{SFT}}}\left(\cdot \mid t^{}_{h}\right), \quad o^{}_{h} = f_{a^{}_{h}} \left(t^{}_{h}\right), \\
&s^{}_{h + 1} = \left(s^{}_{h}, t^{}_{h}, a^{}_{h}, o^{}_{h}\right) , \\
\end{aligned}
% \end{small}
\end{equation}

by checking whether $\big|r_{\hat{\phi}_{\text{SDPO}}}\left(s_{h+1}\right)- r_{\hat{\phi}_{\text{SDPO}}}\left(s_{h}\right)\big|\geq \epsilon$, where $\epsilon$ is some predetermined positive threshold. 
We repeat such procedure unless $\big|r_{\hat{\phi}_{\text{SDPO}}}\left(s_{h+1}\right)- r_{\hat{\phi}_{\text{SDPO}}}\left(s_{h}\right)\big|< \epsilon$. The following lemma gives an equivalent explicit termination condition.

\begin{lemma}\label{lem:stopping rule}
The reasoner $\texttt{R}_{\theta_0}$ stops at the reasoning step $h$ if 
\begin{equation}
% \begin{small}
\begin{aligned}
&\big|r_{\hat{\phi}_{\text{SDPO}}}\left(s_{h+1}\right)- r_{\hat{\phi}_{\text{SDPO}}}\left(s_{h}\right)\big|< \epsilon\nonumber \Leftrightarrow 
\bigg|\log \frac{\texttt{V}_{\hat{\phi}_{\text{SDPO}}}\left( t_{h}^{}\mid s_{h}^{}\right)}{\texttt{V}_{\phi_0}\left( t_{h}^{}\mid s_{h}^{}\right)} + \log \frac{\texttt{V}_{\hat{\phi}_{\text{SDPO}}}\left( a_{h}^{}\mid t_{h}^{}\right)}{\texttt{V}_{\phi_0}\left( a_{h}^{}\mid t_{h}^{}\right)} \bigg| 
    < \epsilon.\label{eq:stopping rule}
\end{aligned}
% \end{small}
\end{equation}
\end{lemma}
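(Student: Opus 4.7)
The plan is to prove the equivalence by direct substitution into the verifier definition \eqref{eq:verifier} and tracking which terms survive the subtraction $r_{\hat{\phi}_{\text{SDPO}}}(s_{h+1}) - r_{\hat{\phi}_{\text{SDPO}}}(s_h)$. First, I would note that the verifier $r_{\phi}$, although written for a completed trajectory $\tau$, extends to any partial state by summing the log-ratio contributions only over steps already present in that state. Thus, writing out $r_{\hat{\phi}_{\text{SDPO}}}(s_{h})$ gives a sum of log-ratios over the planning tokens $t_1,\ldots,t_{h-1}$ and action tokens $a_1,\ldots,a_{h-1}$, together with the boundary term $Q(s_1)$, and similarly $r_{\hat{\phi}_{\text{SDPO}}}(s_{h+1})$ extends those sums by one additional index since $s_{h+1}=(s_h,t_h,a_h,o_h)$.

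Next I would perform the subtraction term by term. The function $Q(s_1)$ depends only on the shared initial state and cancels. All log-ratios indexed by $1,\ldots,h-1$ appear identically in both expressions and also cancel. What remains is exactly the pair of step-$h$ contributions
\[
r_{\hat{\phi}_{\text{SDPO}}}(s_{h+1}) - r_{\hat{\phi}_{\text{SDPO}}}(s_{h})
= \eta\log\frac{\texttt{V}_{\hat{\phi}_{\text{SDPO}}}(t_h\mid s_h)}{\texttt{V}_{\phi_0}(t_h\mid s_h)}
 + \eta\log\frac{\texttt{V}_{\hat{\phi}_{\text{SDPO}}}(a_h\mid t_h)}{\texttt{V}_{\phi_0}(a_h\mid t_h)}.
\]
Taking absolute values on both sides and comparing against the threshold $\epsilon$ (with the positive constant $\eta$ absorbed into a rescaling of $\epsilon$, as is standard for DPO thresholds) yields the claimed equivalence, and the stopping rule follows immediately from the inference procedure stated just before the lemma.

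The only subtle point, and the step I would be most careful about, is making precise how $r_{\phi}$ is applied to a partial state rather than to a completed trajectory. The definition in \eqref{eq:verifier} is written with upper summation limit $H_\tau+1$ (respectively $H_\tau$), which is tied to the random length of the full trajectory; one must confirm that evaluating on $s_h$ simply replaces these limits with $h-1$ (respectively $h-1$), so that the final planning token $t_{H_\tau+1}$ plays no special role in the intermediate comparison. Once this convention is fixed, the rest of the argument is pure telescoping cancellation and poses no real difficulty.
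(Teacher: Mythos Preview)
Your proposal is correct and follows essentially the same approach as the paper: the paper's proof is the single sentence ``This is directly obtained from the definition of $r_{\hat{\phi}_{\text{SDPO}}}$,'' and your argument is precisely the explicit unpacking of that definition via telescoping cancellation. Your attention to the partial-state convention and the absorption of $\eta$ into $\epsilon$ are valid points that the paper simply leaves implicit.
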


Algorithm \ref{alg:training and inference} characterizes the full training and test procedure of our method.

Given $s_1\sim \mathcal{D}$, denote by $H_{\tau}\left(\hat{\phi}_{\text{SDPO}}, \phi_0; \hat{\theta}_{\text{SFT}}\right)$\footnote{For notation simplicity, here we omit the dependence on $s_1$.} the total reasoning step given by Algorithm \ref{alg:training and inference}, the next theorem shows that $H_{\tau}\left(\hat{\phi}_{\text{SDPO}}, \phi_0; \hat{\theta}_{\text{SFT}}\right)$ is finite almost surely (a.s.).

\begin{theorem}[Reasoning steps characterization]\label{thm:finite stopping time}
The total reasoning step satisfies that
\vspace{-0.25cm}
\begin{enumerate}
    \item $H_{\tau}\left(\hat{\phi}_{\text{SDPO}}, \phi_0; \hat{\theta}_{\text{SFT}}\right)$ is a stopping time.
\vspace{-0.25cm}
    
    \item (Informal). Under some mild condition, $H_{\tau}\left(\hat{\phi}_{\text{SDPO}}, \phi_0; \hat{\theta}_{\text{SFT}}\right)$ is finite with probability 1.
\end{enumerate}
\vspace{-0.25cm}

\end{theorem}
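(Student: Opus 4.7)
The plan is to split the theorem naturally: first verify the measurability claim in (1), then lean on a boundedness and convergence argument for (2).

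For part (1), I would set up the filtration $\mathcal{F}_h = \sigma(s_1, t_1, a_1, o_1, \ldots, t_h, a_h, o_h)$ generated by the reasoning sequence through step $h$. By Lemma~\ref{lem:stopping rule}, the event $\{H_\tau = h\}$ is the intersection of not stopping at any previous step with the condition
\[
\bigg|\log \frac{\texttt{V}_{\hat{\phi}_{\text{SDPO}}}(t_h \mid s_h)}{\texttt{V}_{\phi_0}(t_h \mid s_h)} + \log \frac{\texttt{V}_{\hat{\phi}_{\text{SDPO}}}(a_h \mid t_h)}{\texttt{V}_{\phi_0}(a_h \mid t_h)}\bigg| < \epsilon
\]
holding at step $h$. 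Each such condition is a measurable function of variables available at time $h$, so $\{H_\tau = h\} \in \mathcal{F}_h$, which by definition makes $H_\tau(\hat{\phi}_{\text{SDPO}}, \phi_0; \hat{\theta}_{\text{SFT}})$ a stopping time.

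For part (2), I would set $X_h := \log \frac{\texttt{V}_{\hat{\phi}_{\text{SDPO}}}(t_h \mid s_h)}{\texttt{V}_{\phi_0}(t_h \mid s_h)} + \log \frac{\texttt{V}_{\hat{\phi}_{\text{SDPO}}}(a_h \mid t_h)}{\texttt{V}_{\phi_0}(a_h \mid t_h)}$, so that $r_{\hat{\phi}_{\text{SDPO}}}(s_{h+1}) - r_{\hat{\phi}_{\text{SDPO}}}(s_h) = \eta X_h$ and the algorithm continues exactly when $|X_h|\geq \epsilon$. The mild condition I would adopt is that the log-density ratios are uniformly bounded, which holds, for instance, whenever the token and action spaces are finite and both $\texttt{V}_{\phi_0}$ and $\texttt{V}_{\hat{\phi}_{\text{SDPO}}}$ are bounded away from $0$. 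Under this assumption I would proceed by either of two interchangeable routes. Route one is a direct support argument: show that $\mathbb{P}(|X_h|<\epsilon \mid \mathcal{F}_{h-1}) \geq p > 0$ uniformly in $h$ (finite support plus positive reasoner probabilities guarantee a strictly positive chance of sampling a token whose two log-ratios each have magnitude less than $\epsilon/2$), so $H_\tau$ is stochastically dominated by a geometric random variable with parameter $p$, giving $\mathbb{P}(H_\tau < \infty) = 1$ and even $\mathbb{E}[H_\tau] < \infty$. Route two is a convergence argument: $r_{\hat{\phi}_{\text{SDPO}}}(s_h)$ is a bounded adapted process under the mild condition, and combining boundedness with a supermartingale or bounded-variation argument exploiting the drift of $\mathbb{E}[X_h \mid \mathcal{F}_{h-1}]$ forces the process to converge almost surely, so $X_h \to 0$ and in particular $|X_h| < \epsilon$ eventually.

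The main obstacle is precisely specifying what the mild condition should be, because the expected increment $\mathbb{E}[X_h \mid \mathcal{F}_{h-1}]$ is taken under the reasoner's sampling law $\texttt{R}_{\hat{\theta}_{\text{SFT}}}$ rather than under $\texttt{V}_{\phi_0}$, so it does not collapse to a clean negative-KL term the way it would under $\texttt{V}_{\phi_0}$-sampling. The first route sidesteps this at the cost of requiring the state-transition structure to keep the conditional mass of small-ratio tokens bounded below, while the second route needs an additional mixing or contraction assumption guaranteeing that the expected increment is non-positive (or at least summable). Once one of these is settled and packaged as the statement's ``mild condition,'' the almost sure finiteness of $H_\tau$ follows immediately from the bounded-geometric domination or from the martingale convergence theorem, respectively.
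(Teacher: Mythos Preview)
Your Part (1) matches the paper's argument essentially verbatim.

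For Part (2), the paper takes your Route 2 and resolves exactly the obstacle you flagged. The key identity you are missing is
\[
\mathbb{E}_{t_h \sim \texttt{R}_{\hat{\theta}_{\text{SFT}}}(\cdot \mid s_h)}\left[\log \frac{\texttt{V}_{\hat{\phi}_{\text{SDPO}}}(t_h \mid s_h)}{\texttt{V}_{\phi_0}(t_h \mid s_h)}\right]
= D_{\text{KL}}\bigl(\texttt{R}_{\hat{\theta}_{\text{SFT}}}(\cdot\mid s_h)\,\|\,\texttt{V}_{\phi_0}(\cdot\mid s_h)\bigr)
- D_{\text{KL}}\bigl(\texttt{R}_{\hat{\theta}_{\text{SFT}}}(\cdot\mid s_h)\,\|\,\texttt{V}_{\hat{\phi}_{\text{SDPO}}}(\cdot\mid s_h)\bigr),
\]
and similarly for the action term. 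The paper's ``mild condition'' is precisely that both such differences are nonnegative---i.e., the reasoner $\texttt{R}_{\hat{\theta}_{\text{SFT}}}$ is closer in KL to $\texttt{V}_{\hat{\phi}_{\text{SDPO}}}$ than to $\texttt{V}_{\phi_0}$---together with $\sup_h \mathbb{E}\,r_{\hat{\phi}_{\text{SDPO}}}^{+}(s_h)<\infty$. This makes $\mathbb{E}[X_h\mid\mathcal{F}_{h-1}]\geq 0$, so $r_{\hat{\phi}_{\text{SDPO}}}(s_h)$ is a \emph{sub}martingale (your sign guess was flipped, though either direction works); the martingale convergence theorem then gives a.s.\ convergence, hence $|X_h|<\epsilon$ eventually.

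Your Route 1 (geometric domination) is a genuinely different and more elementary alternative that the paper does not pursue. It would yield the stronger conclusion $\mathbb{E}[H_\tau]<\infty$, but at the price of a finite-support and uniform-positivity hypothesis that is harder to justify for language-model distributions than the paper's KL-ordering condition, which has a clean interpretation (the DPO-aligned verifier is closer to the reasoner than the base model is).
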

Theorem~\ref{thm:finite stopping time} shows that our algorithm is not only flexible—allowing variational reasoning steps that improve upon the results of \cite{xiong2024building}—but also guarantees a bounded number of reasoning steps. This ensures that our method avoids scenarios in which the reasoner would otherwise produce looping or repetitive outputs. The detailed proof can be found in Appendix \ref{app:proof of finite stopping time}. The superiority of our algorithm can be shown in the following experiments.

\section{Visual Tokens Scaling Dataset Construction}
\label{sec:dataset}

As discussed in Section \ref{sec:vtsv}, our goal is to train a \textit{reasoner} capable of performing visual token scaling—namely, learning to invoke visual understanding tools throughout the reasoning process to generate intermediate images rich in detailed visual token information. At the same time, we aim to train a \textit{verifier} that can distinguish between two reasoning trajectories of similar structure for the same task, identifying the one that more effectively scales visual information. This collaborative setup enables the verifier to guide and refine the reasoner’s visual reasoning process.

However, a major challenge lies in the absence of existing datasets that contain long-chain, tool-grounded reasoning traces with rich intermediate visual states. In particular, no existing datasets provide both detailed visual CoT-style supervision and curated accepted–rejected trajectory pairs required for DPO training. To address this, we construct a dedicated dataset by building upon the single-image dataset of the LLaVA-OneVision dataset (LLaVA-OV) \cite{li2024llava}, which contains 3.2M vision-language examples covering a broad range of tasks.

In this section, we describe our dataset construction pipeline in detail. Section~\ref{sec:dataset:dataset_collection} introduces how we collect and preprocess visual reasoning examples from LLaVA-OV to produce structured tool-based trajectories. Section~\ref{sec:dataset:sft_dpo_dataset} then explains how we derive both the SFT and DPO datasets from these verified trajectories.

\vspace{-0.5cm}
\begin{figure}[t]
    \centering
    \includegraphics[width=1\textwidth]{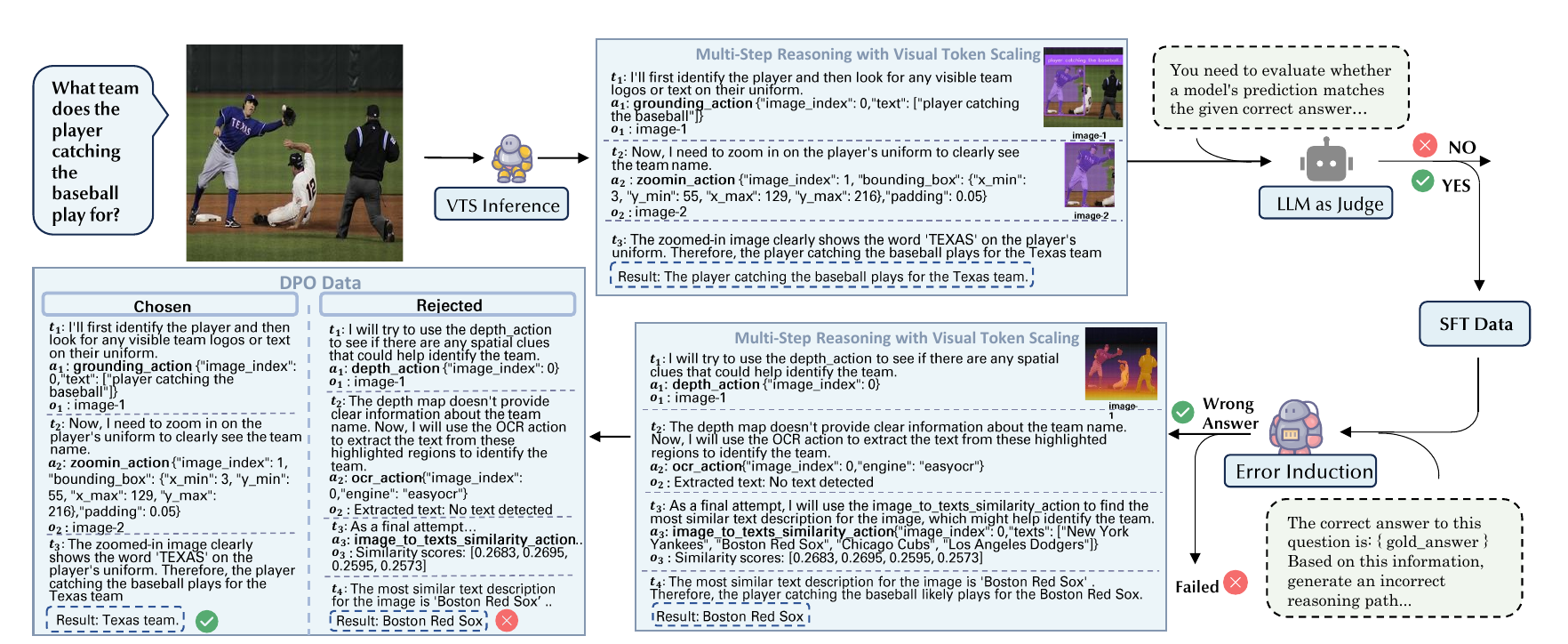}
    % \vspace{-0.5cm}
    \caption{\textbf{Pipeline for Synthetic Data Generation and Curation in VTS-V}.Our data construction process consists of three stages: (1) generating multi-step reasoning trajectories with visual tool calls, (2) filtering out incorrect trajectories using an LLM-as-a-judge framework, and (3) creating contrastive (correct vs. incorrect) trajectory pairs for multi-step DPO training.}
   \label{fig:pipeline}
   % \vspace{-0.5cm}
\end{figure}

\subsection{Visual Token Scaling Dataset Collection and Preprocessing}
\vspace{-0.15cm}
\label{sec:dataset:dataset_collection}

We begin our dataset construction by sampling image-question pairs from the single-image portion of the LLaVA-OV dataset, which covers a wide range of vision-language tasks such as grounding, chart understanding, OCR, and mathematical reasoning. To ensure sufficient task diversity, we uniformly sample across all 83 task types included in the dataset.

For each sampled example, we employ \texttt{Qwen-2.5-VL-72B} in a few-shot prompting setting to generate a visual token scaling trajectory. This trajectory is composed of a sequence of tool-based operations drawn from our predefined action space $\mathcal{A}$ (as described in Section~\ref{sec:vts:formulation}). In line with prior work \cite{gupta2023visual,hu2024visual}, the action space comprises ten visual tools: \texttt{GroundingAction}, \texttt{DepthAction}, \texttt{ZoomInAction}, \texttt{VisualSearchAction}, \texttt{Crop}, \texttt{OCR}, \texttt{ImageSegment}, \texttt{ImageCaptioner}, \texttt{SimilarityComputing}, and \texttt{Overlay}. These tools allow the model to iteratively perceive, transform, and enrich the visual content throughout the multi-step reasoning process.

Generated trajectories may include errors, invalid tool use, or inconsistent reasoning. We apply a preprocessing pipeline to flatten nested structures, merge related turns, remove malformed examples, and ensure metadata consistency. We also filter out trivial cases with no tool use and those exceeding 20,000 tokens due to memory limits.
To guarantee the correctness of each trajectory, we employ the same \texttt{Qwen-2.5-VL-72B} model as an LLM-based verifier (\texttt{llm\_as\_a\_judge}) to filter out those whose final answers are deemed incorrect. Starting from over 650K generated trajectories, this process results in a curated set of 315K high-quality visual token scaling examples, which serve as the basis for supervised fine-tuning and preference training.

\vspace{-0.15cm}
\subsection{VTS-SFT and VTS-DPO Dataset Construction}
\label{sec:dataset:sft_dpo_dataset}
\vspace{-0.15cm}
Starting from the 315K verified visual token scaling trajectories described above, we construct two datasets: one for supervised fine-tuning (VTS-SFT) and one for direct preference optimization (VTS-DPO).

\textbf{VTS-SFT Dataset Construction.}
To construct the VTS-SFT dataset, we transform each verified trajectory into a supervised training instance. Each trajectory $\tau$ follows the reasoning path format defined in Equation~\ref{eq:reasoning path}, consisting of a sequence of textual states, tool actions, and visual observations, concluding with a final answer. We retain trajectories where the final answer satisfies $t_{H_\tau + 1} = t^*$, where $t^*$ is the ground-truth label associated with the input $s_1$. The resulting supervised dataset is defined as:
\[
\mathcal{D}_{\text{SFT}} = \left\{ (s_1, \tau) \;\middle|\; t_{H_\tau + 1} = t^*,\; \texttt{llm\_as\_a\_judge}(\tau) = \texttt{correct} \right\}.
\]
Each trajectory in $\mathcal{D}_{\text{SFT}}$ is tool-grounded and preserves all intermediate reasoning states, providing rich supervision for training. After processing, the dataset contains approximately 315K high-quality examples.

\textbf{VTS-DPO Dataset Construction.}
To construct the preference dataset $\mathcal{D}_{\text{DPO}}$, we generate suboptimal reasoning trajectories as contrastive pairs. For each $(s_1, \tau^w) \in \mathcal{D}_{\text{SFT}}$, where $\tau^w$ is a correct trajectory, we design prompts that instruct the model to begin from an intermediate point and then proceed with incorrect reasoning steps, yielding a wrong final answer, which is also the Error Induction step in Figure \ref{fig:pipeline}. The resulting suboptimal trajectory $\tau^l$ is paired with $\tau^w$ to form a preference tuple $(s_1, \tau^w, \tau^l)$.

We format these pairs into the standard DPO training structure and remove any example where either $\tau^w$ or $\tau^l$ contains empty or missing image inputs. After filtering, the final dataset $\mathcal{D}_{\text{DPO}}$ comprises 301K preference pairs for training the verifier to assess and guide visual token scaling quality.

% \vspace{-0.25cm}
\section{Experiments}
% \vspace{-0.25cm}
We conduct comprehensive experiments to evaluate the effectiveness of our proposed Visual Token Scaling with Verification (VTS-V) framework across a range of visual reasoning tasks.
This section is organized as follows:  
In Section~\ref{sec:experimental_setup}, we describe the experimental setup, including model variants, training configurations, and evaluation benchmarks.  
Section~\ref{sec:end_to_end} presents our end-to-end experiments, comparing VTS-V to several strong baselines in both closed-source and open-source settings.  
Finally, in Section~\ref{sec:ablation}, we conduct ablation studies to analyze the individual contributions of visual token scaling and verifier integration.

\vspace{-0.15cm}
\subsection{Experimental Setup}
\label{sec:experimental_setup}
\vspace{-0.15cm}
\begin{table*}[t]
\scriptsize
\centering
\newcolumntype{C}{>{\centering\arraybackslash}p{6.5mm}}
\resizebox{\textwidth}{!}{%
\begin{tabular}{l*{14}{C}} 
\toprule
\textbf{Model} & Depth & Spatial & Jigsaw & VisCorr & SemCorr & ArtStyle & Count & FunCorr & Local & MultiV & Refl & Fore & Sim & Avg \\
\midrule
\multicolumn{15}{c}{\textbf{Qwen2.5-VL-7B-Instruct Variants}} \\
\midrule
Qwen2.5VL-7B & 65.32 & 83.22 & 56.67 & 40.12 & 24.46 & 58.97 & 65.00 & 19.23 & 41.80 & 43.61 & 25.37 & 34.85 & 79.26 & 49.07 \\
Qwen2.5VL-7B + VTS-V & \cellcolor{green!10}66.13 & \cellcolor{red!5}37.76 & \cellcolor{red!5}53.33 & \cellcolor{green!10}58.72 & \cellcolor{green!10}36.69 & \cellcolor{red!5}58.97 & \cellcolor{red!5}41.18 & \cellcolor{green!10}23.85 & \cellcolor{green!10}45.08 & \cellcolor{red!5}42.86 & \cellcolor{red!5}26.12 & \cellcolor{green!10}38.64 & \cellcolor{red!5}64.44 & \cellcolor{red!5}45.67 \\
Ours + VTS-V & \cellcolor{green!10}70.97 & \cellcolor{green!10}86.01 & \cellcolor{green!10}68.67 & \cellcolor{green!10}54.44 & \cellcolor{green!10}33.81 & \cellcolor{green!10}67.52 & \cellcolor{green!10}65.83 & \cellcolor{green!10}30.00 & \cellcolor{green!10}49.18 & \cellcolor{green!10}55.64 & \cellcolor{green!10}38.06 & \cellcolor{green!10}36.36 & \cellcolor{green!10}80.00 & \cellcolor{green!10}\textbf{56.65} \\
\midrule
\multicolumn{15}{c}{\textbf{Qwen2-VL-7B-Instruct Variants}} \\
\midrule
Qwen2VL-7B & 57.26 & 79.72 & 54.00 & 33.72 & 31.65 & 51.28 & 73.33 & 18.46 & 54.10 & 45.11 & 33.58 & 38.64 & 53.33 & 48.01 \\
Qwen2VL-7B + VTS-V & \cellcolor{red!5}49.19 & \cellcolor{red!5}67.83 & \cellcolor{green!10}57.05 & \cellcolor{red!5}15.72 & \cellcolor{red!5}17.56 & \cellcolor{red!5}42.74 & \cellcolor{red!5}43.33 & \cellcolor{red!5}12.90 & \cellcolor{red!5}49.59 & \cellcolor{red!5}30.83 & \cellcolor{red!5}32.84 & \cellcolor{red!5}29.55 & \cellcolor{red!5}51.85 & 38.54 \\
Ours + VTS-V & \cellcolor{green!10}60.48 & \cellcolor{red!5}60.48 & \cellcolor{green!10}58.00 & \cellcolor{green!10}37.21 & \cellcolor{green!10}40.58 & \cellcolor{green!10}76.92 & \cellcolor{red!5}63.33 & \cellcolor{green!10}28.35 & \cellcolor{green!10}51.64 & \cellcolor{green!10}47.37 & \cellcolor{green!10}35.82 & \cellcolor{green!10}34.09 & \cellcolor{green!10}84.21 & \textbf{52.19} \\
\midrule
\multicolumn{15}{c}{\textbf{LLaMA-3.2-11B-Vision-Instruct Variants}} \\
\midrule
LLaMA3.2-11B & 63.71 & 67.13 & 53.33 & 50.58 & 39.57 & 47.86 & 55.00 & 32.31 & 62.30 & 48.12 & 31.34 & 25.76 & 46.67 & 47.98 \\
LLaMA3.2-11B + VTS-V & - & - & - & - & - & - & - & - & - & - & - & - & - & - \\
Ours + VTS-V & \cellcolor{green!10}68.55 & \cellcolor{green!10}69.23 & \cellcolor{green!10}57.33 & \cellcolor{red!5}38.60 & \cellcolor{green!10}47.48 & \cellcolor{green!10}55.56 & \cellcolor{green!10}56.67 & \cellcolor{green!10}35.43 & \cellcolor{red!5}58.20 & \cellcolor{green!10}52.63 & \cellcolor{green!10}33.58 & \cellcolor{red!5}23.48 & \cellcolor{green!10}57.46 & \textbf{50.32} \\
\midrule
\multicolumn{15}{c}{\textbf{GPT-4o Variants}} \\
\midrule
GPT-4o & 74.19 & 69.23 & 55.33 & 75.00 & 53.96 & 82.91 & 49.17 & 40.77 & 59.84 & 59.40 & 37.31 & 79.55 & 72.59 & 62.25 \\
GPT-4o + Sketchpad & \cellcolor{green!10}83.90 & \cellcolor{green!10}81.10 & \cellcolor{green!10}70.70 & \cellcolor{green!10}80.80 & \cellcolor{green!10}58.30 & \cellcolor{red!5}77.19 & \cellcolor{green!10}66.70 & \cellcolor{green!10}42.10 & \cellcolor{green!10}65.40 & \cellcolor{red!5}45.60 & \cellcolor{red!5}33.10 & \cellcolor{red!5}79.00 & \cellcolor{green!10}84.20 & 66.78 \\
GPT-4o + CoT & \cellcolor{red!5}73.39 & \cellcolor{green!10}82.52 & \cellcolor{green!10}62.00 & \cellcolor{green!10}82.56 & \cellcolor{green!10}57.55 & \cellcolor{red!5}82.05 & \cellcolor{green!10}65.00 & \cellcolor{green!10}57.69 & \cellcolor{green!10}60.66 & \cellcolor{red!5}53.38 & \cellcolor{green!10}41.04 & \cellcolor{red!5}62.88 & \cellcolor{red!5}63.70 & 64.96 \\
GPT-4o + SoM & \cellcolor{red!5}68.55 & \cellcolor{green!10}76.22 & \cellcolor{red!5}49.33 & \cellcolor{green!10}83.72 & \cellcolor{red!5}52.52 & - & \cellcolor{red!5}43.33 & \cellcolor{green!10}47.69 & \cellcolor{red!5}59.84 & \cellcolor{red!5}56.40 & - & - & \cellcolor{red!5}63.70 & 60.13 \\
GPT-4o + MMFactory & \cellcolor{green!10}80.30 & \cellcolor{green!10}81.80 & \cellcolor{green!10}75.30 & \cellcolor{green!10}85.50 & \cellcolor{green!10}58.30 & \cellcolor{green!10}83.00 & \cellcolor{green!10}61.70 & \cellcolor{green!10}55.40 & \cellcolor{red!5}59.00 & \cellcolor{green!10}60.20 & \cellcolor{red!5}35.10 & \cellcolor{green!10}84.80 & \cellcolor{green!10}75.30 & 68.90 \\
GPT-4o + VTS-V (Ours) & \cellcolor{green!10}79.84 & \cellcolor{green!10}85.31 & \cellcolor{green!10}75.33 & \cellcolor{green!10}82.56 & \cellcolor{green!10}56.83 & \cellcolor{red!5}80.34 & \cellcolor{green!10}67.50 & \cellcolor{green!10}53.08 & \cellcolor{green!10}68.85 & \cellcolor{red!5}52.63 & \cellcolor{green!10}40.30 & \cellcolor{red!5}71.21 & \cellcolor{green!10}85.19 & \textbf{69.15} \\
\bottomrule
\end{tabular}
}
\caption{
\textbf{Model performance on BLINK subtasks.} Each column corresponds to a different visual reasoning task in the BLINK benchmark. Highlighted cells show whether using the proposed \texttt{VTS-V} method \colorbox{green!10}{improves} or \colorbox{red!5}{degrades} the performance compared to the base model.
}
\label{tab:blink_results}
\vspace{-0.3cm}
\end{table*}
\paragraph{Models and Baselines.} 

Our experiments are divided into two main parts: closed-source models and open-source models.
For the closed-source setting, we follow prior work and evaluate the effectiveness of VTS-V using GPT-4o, comparing it with four strong baselines: (\underline{i}) Zero-shot reasoning on GPT-4o;  (\underline{ii}) Chain-of-Thought (CoT) reasoning on GPT-4o; (\underline{iii}) Visual prompting framework: Set-of-Mark (SoM), and Visual Sketchpad; and (\underline{iv}): Tool-using framework: MMFactory.

For the open-source setting, we fine-tune three vision-language model: Qwen2-VL-7B-Instruct \cite{wang2024qwen2}, Qwen2.5-VL-7B-Instruct\cite{bai2025qwen2}, and LLaMA3.2-Vision-Instruct-11B on our VTS-SFT dataset. We evaluate the performance of VTS-V both before and after fine-tuning to assess its impact.
Our verifier model is trained on \texttt{Qwen2.5-VL-7B-Instruct} using our VTS-DPO dataset.
All SFT experiments are conducted using LLaMA Factory under unified settings, and DPO training is carried out with TRL. Detailed training configurations are provided in Appendix~\ref{append:exp}.

\vspace{-0.25cm}
\paragraph{Evaluation Tasks.}  
We evaluate on four representative vision-language reasoning benchmarks: BLINK \cite{fu2025blink}, $V^*$Bench \cite{v*}, MMStar \citep{chen2024we}, and MathVista \cite{lu2023mathvista}. These benchmarks span a diverse set of capabilities, including multi-image perception and reasoning (BLINK), fine-grained understanding of small visual objects ($V*$Bench), broad general knowledge QA (MMStar), and visual mathematical reasoning (MathVista). 
Following \citep{hu2024visual} and \citep{fan2024mmfactory}, we conduct our main experiments on 13 tasks from BLINK, a challenging and diverse benchmark designed to evaluate fine-grained visual reasoning. These tasks span visual and semantic correspondence, spatial understanding, multi-view reasoning, depth and reflectance estimation, counting, object localization, pattern alignment (e.g., jigsaw), art style comparison, functional and semantic matching, visual similarity, and forensic image detection.

\vspace{-0.15cm}
\subsection{End-to-End Experiments}
\label{sec:end_to_end}
\vspace{-0.15cm}
\paragraph{VTS-V improves performance of closed-source models.}  
On the BLINK benchmark, our verifier-guided visual token scaling (VTS-V) method brings robust and consistent improvements to GPT-4o. Specifically, the average performance increases from 62.25\% to 69.15\% (+6.90), outperforming all other prompting-based and tool-augmented variants. Compared with GPT-4o + MMFactory, VTS-V yields a slight but meaningful gain of +0.25. It shows larger improvements over GPT-4o + CoT (+3.30), GPT-4o + SoM (+9.02), and GPT-4o + Sketchpad (+2.37). The largest gains are observed on complex compositional tasks, such as Counting (+18.33), Functionally-Correlated (+12.91), and Visually-Correlated (+7.56), indicating that VTS-V strengthens reasoning over fine-grained and structured visual information.
\vspace{-0.25cm}
\paragraph{VTS-V enhances open-source models fine-tuned on our dataset.}  
Open-source models also benefit significantly from our framework. When these models are first fine-tuned on our VTS-SFT dataset and then evaluated using VTS-V, consistent gains are observed. For example, Qwen2VL-7B improves from 48.01\% to 52.19\% (+4.18), showing gains in Multi-view (+3.44), Local (+2.48), and Jigsaw (+4.08). Similarly, Qwen2.5VL-7B improves from 49.07\% to 56.65\% (+7.58), with boosts in Depth (+5.65), Spatial (+2.79), and Visual Correlation (+14.32). LLaMA3.2-11B, despite being a larger model, shows an improvement from 47.98\% to 50.32\% (+2.34), especially on tasks like Semantic Correlation (+8.89) and Functional Correlation (+3.12). These results demonstrate that VTS-V generalizes well to open-source architectures and consistently improves performance across vision reasoning categories.
%这里加一下general task

\vspace{-0.15cm}
\paragraph{Fine-tuning on VTS-SFT datasets enhances models’ tool-using abilities.}
We find that applying VTS-V directly to open-source models without supervised fine-tuning can harm performance. For instance, applying VTS-V to Qwen2VL-7B drops performance from 48.01\% to 38.54\%, and LLaMA3.2-11B fails to output any valid tool-use actions. This behavior contrasts with GPT-4o, which can reliably follow verifier guidance even without additional training. These results suggest that base open-source models do not possess the necessary behavior patterns to interact properly with verifier signals or external tools.
However, after supervised fine-tuning on our VTS-SFT dataset, these models not only recover but surpass their original performance. Post-finetuning, Qwen2VL-7B improves to 52.19\%, Qwen2.5VL-7B rises to 56.65\%, and LLaMA3.2-11B reaches 50.32\%. More importantly, they now produce valid tool-use actions, indicating successful alignment with verifier guidance. This confirms that the VTS-SFT dataset is critical for enabling reliable tool interaction in open-source VLMs.

\begin{wraptable}{r}{0.55\textwidth}  % 减小宽度比例
\scriptsize
\centering
\setlength{\tabcolsep}{5pt}  % 控制列间距
\vspace{-2mm}  % 表格与正文的垂直间距（如需要可进一步调整）
\begin{tabular}{lcccc}
\toprule
\textbf{Trained Model} & \textbf{V*Bench} & \textbf{MMStar} & \textbf{MathVista} \\
% & \textbf{RealworldQA} \\
\midrule
Qwen2.5VL-7B & 73.30 & 55.00 & 22.60 \\
% & 64.97 \\
Qwen2.5VL-7B + VTS-V & \cellcolor{red!5}67.54 & \cellcolor{green!10}55.93 & \cellcolor{red!5}7.80 \\
% & -- \\
Qwen2.5VL-7B (Ours) + VTS-V & \cellcolor{green!10}75.13 & \cellcolor{green!10}57.93 & \cellcolor{green!10}23.52 \\
%& \cellcolor{green!10}65.36 \\
\midrule
Qwen2VL-7B & 66.49 & 53.93 & 22.42 \\
% & 65.75 \\
Qwen2VL-7B + VTS-V & \cellcolor{red!5}50.80 & \cellcolor{red!5}44.29 & \cellcolor{red!5}20.24 \\
% & -- \\
Qwen2VL-7B (Ours) + VTS-V & \cellcolor{green!10}66.67 & \cellcolor{green!10}55.26 & \cellcolor{green!10}23.80 \\
% & \cellcolor{red!5}62.48 \\
\midrule
LLaMA-3.2-11B & 54.45 & 50.57 & 30.10 \\
% & 57.78 \\
LLaMA-3.2-11B + VTS-V & -- & -- & -- \\
% & -- \\
LLaMA-3.2-11B (Ours) + VTS-V & \cellcolor{green!10}60.21 & \cellcolor{green!10}50.71 & \cellcolor{red!5}18.81 \\ 
% \cellcolor{red!5}56.68 \\
\bottomrule
\end{tabular}
\vspace{1mm}  % 控制表格与 caption 间距
\caption{\textbf{Model performance on general benchmarks.} Evaluation on V*Bench, MMStar, and MathVista. Dashes indicate missing results.}
\label{tab:general_results}
\vspace{-0.2cm} 
% \end{table}
\end{wraptable}
\vspace{-0.15cm}
\paragraph{VTS-V generalizes to diverse benchmarks.}
To assess generalization, we evaluate models on V*Bench\cite{v*}, MMStar\cite{chen2024we}, and MathVista\cite{lu2023mathvista} (Table~\ref{tab:general_results}). Without fine-tuning, applying VTS-V can degrade performance—for instance, Qwen2.5VL-7B drops from 73.30 to 67.54 on VBench. After full training (VTS-SFT + VTS-V), performance improves across all models: Qwen2.5VL-7B reaches 75.13, Qwen2VL-7B recovers to 66.67, and LLaMA3.2-11B rises to 60.21.
While LLaMA shows a drop on MathVista (30.10 to 18.81), we note that its original strength is skewed toward MathVista, with weaker performance on other benchmarks. Since our method improves general performance across models, this drop likely reflects LLaMA’s task bias rather than a limitation of VTS-V.

\vspace{-0.15cm}
\subsection{Ablation Study and Discussion}
\label{sec:ablation}
\vspace{-0.15cm}
We compare \texttt{GPT-4o + VTS-V} with a text-only chain-of-thought baseline (\texttt{GPT-4o + CoT}). \texttt{VTS-V} achieves 69.15\% average accuracy, surpassing \texttt{CoT}’s 64.96\% by +4.19 points. The gains are especially strong on tasks needing structured visual reasoning, such as Counting (+10.90), Functionally-Correlated (+10.98), and Visually-Correlated (+1.76). These results highlight that scaling visual reasoning via verifier-guided tool use is more effective than relying solely on extended textual reasoning.

\begin{wraptable}{r}{0.55\textwidth}  % 减小宽度比例
\centering
\small  % 使用更小的字体
\begin{tabular}{@{}lccc@{}}
\toprule
% \diagbox{Method}{Acc}{Model} 
& \textbf{Qwen2.5VL} & \textbf{Qwen2VL} & \textbf{Llama3.2vision} \\
\midrule
Full & \textbf{54.44} & \textbf{37.21} & \textbf{38.60} \\
w/o verifier & {47.09} & 33.14 & 36.63 \\
w/o VTS-V & 40.70 & 19.77 & 31.98 \\
\bottomrule
\end{tabular}
\caption{Ablation results: accuracy drops when removing verifier or VTS-V.}
\label{tab:ablation}
\end{wraptable}
\vspace{-0.2cm}
\paragraph{Verifier improves reasoning quality.}  
\vspace{-0.25cm}
Ablation results in Table \ref{tab:ablation} clearly demonstrate the effectiveness of the verifier module. When the verifier is removed (“w/o verifier”), accuracy consistently drops across all models—Qwen2.5VL sees a decline from 54.44 to 47.09, Qwen2VL from 37.21 to 33.14, and Llama3.2vision from 38.60 to 36.63. This indicates that the verifier plays a crucial role in improving reasoning by filtering out suboptimal answer paths and enhancing decision quality. These improvements are even more pronounced compared to removing the whole VTS and verifier, which leads to further substantial accuracy drops. 

% \vspace{-0.25cm}
\section{Conclusions and Limitations}
% \vspace{-0.25cm}
\label{sec:conclusions_limitations}
This paper presents a new framework, Visual Token Scaling with Verification (VTS-V), to enhance multi-step visual reasoning by iteratively selecting visual actions and verifying their utility at each step. By framing the process as a Markov Decision Process and integrating a verifier trained via step-wise Direct Preference Optimization, the approach enables models to progressively refine their understanding of complex visual inputs. VTS-V achieves strong performance across a range of challenging visual reasoning benchmarks, outperforming existing baselines.

\textbf{Limitations.} The framework can be adapted to support a broader set of visual tools and more diverse reasoning formats. Enhancing the verifier’s adaptability across tasks and integrating it more tightly with downstream applications may further improve robustness. We believe VTS-V offers a flexible foundation for future research in compositional and tool-augmented visual reasoning.

\bibliographystyle{unsrt}
\bibliography{ref}

\begin{thebibliography}{10}

\bibitem{fu2025blink}
Xingyu Fu, Yushi Hu, Bangzheng Li, Yu~Feng, Haoyu Wang, Xudong Lin, Dan Roth, Noah~A Smith, Wei-Chiu Ma, and Ranjay Krishna.
\newblock Blink: Multimodal large language models can see but not perceive.
\newblock In {\em European Conference on Computer Vision}, pages 148--166. Springer, 2025.

\bibitem{v*}
Penghao Wu and Saining Xie.
\newblock $\text{V}^*$: Guided visual search as a core mechanism in multimodal llms.
\newblock In {\em Proceedings of the IEEE/CVF Conference on Computer Vision and Pattern Recognition}, pages 13084--13094, 2024.

\bibitem{tong2024eyes}
Shengbang Tong, Zhuang Liu, Yuexiang Zhai, Yi~Ma, Yann LeCun, and Saining Xie.
\newblock Eyes wide shut? exploring the visual shortcomings of multimodal llms.
\newblock In {\em Proceedings of the IEEE/CVF Conference on Computer Vision and Pattern Recognition}, pages 9568--9578, 2024.

\bibitem{guo2024mammoth}
Jarvis Guo, Tuney Zheng, Yuelin Bai, Bo~Li, Yubo Wang, King Zhu, Yizhi Li, Graham Neubig, Wenhu Chen, and Xiang Yue.
\newblock Mammoth-vl: Eliciting multimodal reasoning with instruction tuning at scale.
\newblock {\em arXiv preprint arXiv:2412.05237}, 2024.

\bibitem{llavacot}
Hao Shao, Shengju Qian, Han Xiao, Guanglu Song, Zhuofan Zong, Letian Wang, Yu~Liu, and Hongsheng Li.
\newblock Visual cot: Unleashing chain-of-thought reasoning in multi-modal language models.
\newblock {\em arXiv preprint arXiv:2403.16999}, 2024.

\bibitem{chen2023shikra}
Keqin Chen, Zhao Zhang, Weili Zeng, Richong Zhang, Feng Zhu, and Rui Zhao.
\newblock Shikra: Unleashing multimodal llm's referential dialogue magic.
\newblock {\em arXiv preprint arXiv:2306.15195}, 2023.

\bibitem{hu2024visual}
Yushi Hu, Otilia Stretcu, Chun-Ta Lu, Krishnamurthy Viswanathan, Kenji Hata, Enming Luo, Ranjay Krishna, and Ariel Fuxman.
\newblock Visual program distillation: Distilling tools and programmatic reasoning into vision-language models.
\newblock In {\em Proceedings of the IEEE/CVF Conference on Computer Vision and Pattern Recognition}, pages 9590--9601, 2024.

\bibitem{gupta2023visual}
Tanmay Gupta and Aniruddha Kembhavi.
\newblock Visual programming: Compositional visual reasoning without training.
\newblock In {\em Proceedings of the IEEE/CVF Conference on Computer Vision and Pattern Recognition}, pages 14953--14962, 2023.

\bibitem{suris2023vipergpt}
D{\'\i}dac Sur{\'\i}s, Sachit Menon, and Carl Vondrick.
\newblock Vipergpt: Visual inference via python execution for reasoning.
\newblock In {\em Proceedings of the IEEE/CVF International Conference on Computer Vision}, pages 11888--11898, 2023.

\bibitem{cot}
Jason Wei, Xuezhi Wang, Dale Schuurmans, Maarten Bosma, Fei Xia, Ed~Chi, Quoc~V Le, Denny Zhou, et~al.
\newblock Chain-of-thought prompting elicits reasoning in large language models.
\newblock {\em Advances in neural information processing systems}, 35:24824--24837, 2022.

\bibitem{som}
Jianwei Yang, Hao Zhang, Feng Li, Xueyan Zou, Chunyuan Li, and Jianfeng Gao.
\newblock Set-of-mark prompting unleashes extraordinary visual grounding in gpt-4v.
\newblock {\em arXiv preprint arXiv:2310.11441}, 2023.

\bibitem{lei2024scaffolding}
Xuanyu Lei, Zonghan Yang, Xinrui Chen, Peng Li, and Yang Liu.
\newblock Scaffolding coordinates to promote vision-language coordination in large multi-modal models.
\newblock {\em arXiv preprint arXiv:2402.12058}, 2024.

\bibitem{yang2023mm}
Zhengyuan Yang, Linjie Li, Jianfeng Wang, Kevin Lin, Ehsan Azarnasab, Faisal Ahmed, Zicheng Liu, Ce~Liu, Michael Zeng, and Lijuan Wang.
\newblock Mm-react: Prompting chatgpt for multimodal reasoning and action.
\newblock {\em arXiv preprint arXiv:2303.11381}, 2023.

\bibitem{li2023making}
Yifei Li, Zeqi Lin, Shizhuo Zhang, Qiang Fu, Bei Chen, Jian-Guang Lou, and Weizhu Chen.
\newblock Making language models better reasoners with step-aware verifier.
\newblock In {\em Proceedings of the 61st Annual Meeting of the Association for Computational Linguistics (Volume 1: Long Papers)}, pages 5315--5333, 2023.

\bibitem{lightmanlet}
Hunter Lightman, Vineet Kosaraju, Yuri Burda, Harrison Edwards, Bowen Baker, Teddy Lee, Jan Leike, John Schulman, Ilya Sutskever, and Karl Cobbe.
\newblock Let's verify step by step.
\newblock In {\em The Twelfth International Conference on Learning Representations}.

\bibitem{zhu2023solving}
Xinyu Zhu, Junjie Wang, Lin Zhang, Yuxiang Zhang, Yongfeng Huang, Ruyi Gan, Jiaxing Zhang, and Yujiu Yang.
\newblock Solving math word problems via cooperative reasoning induced language models.
\newblock In {\em Proceedings of the 61st Annual Meeting of the Association for Computational Linguistics (Volume 1: Long Papers)}, pages 4471--4485, 2023.

\bibitem{ma2023let}
Qianli Ma, Haotian Zhou, Tingkai Liu, Jianbo Yuan, Pengfei Liu, Yang You, and Hongxia Yang.
\newblock Let's reward step by step: Step-level reward model as the navigators for reasoning.
\newblock {\em arXiv preprint arXiv:2310.10080}, 2023.

\bibitem{wang2023math}
Peiyi Wang, Lei Li, Zhihong Shao, RX~Xu, Damai Dai, Yifei Li, Deli Chen, Y~Wu, and Zhifang Sui.
\newblock Math-shepherd: A label-free step-by-step verifier for llms in mathematical reasoning.
\newblock {\em arXiv preprint arXiv:2312.08935}, 2023.

\bibitem{ziegler2019fine}
Daniel~M Ziegler, Nisan Stiennon, Jeffrey Wu, Tom~B Brown, Alec Radford, Dario Amodei, Paul Christiano, and Geoffrey Irving.
\newblock Fine-tuning language models from human preferences.
\newblock {\em arXiv preprint arXiv:1909.08593}, 2019.

\bibitem{yangrlcd}
Kevin Yang, Dan Klein, Asli Celikyilmaz, Nanyun Peng, and Yuandong Tian.
\newblock Rlcd: Reinforcement learning from contrastive distillation for lm alignment.
\newblock In {\em The Twelfth International Conference on Learning Representations}.

\bibitem{korbak2023pretraining}
Tomasz Korbak, Kejian Shi, Angelica Chen, Rasika~Vinayak Bhalerao, Christopher Buckley, Jason Phang, Samuel~R Bowman, and Ethan Perez.
\newblock Pretraining language models with human preferences.
\newblock In {\em International Conference on Machine Learning}, pages 17506--17533. PMLR, 2023.

\bibitem{liu2023improving}
Yixin Liu, Avi Singh, C~Daniel Freeman, John~D Co-Reyes, and Peter~J Liu.
\newblock Improving large language model fine-tuning for solving math problems.
\newblock {\em arXiv preprint arXiv:2310.10047}, 2023.

\bibitem{vstar}
Arian Hosseini, Xingdi Yuan, Nikolay Malkin, Aaron Courville, Alessandro Sordoni, and Rishabh Agarwal.
\newblock V-star: Training verifiers for self-taught reasoners.
\newblock {\em arXiv preprint arXiv:2402.06457}, 2024.

\bibitem{xiong2024building}
Wei Xiong, Chengshuai Shi, Jiaming Shen, Aviv Rosenberg, Zhen Qin, Daniele Calandriello, Misha Khalman, Rishabh Joshi, Bilal Piot, Mohammad Saleh, et~al.
\newblock Building math agents with multi-turn iterative preference learning.
\newblock {\em arXiv preprint arXiv:2409.02392}, 2024.

\bibitem{li2024llava}
Bo~Li, Yuanhan Zhang, Dong Guo, Renrui Zhang, Feng Li, Hao Zhang, Kaichen Zhang, Peiyuan Zhang, Yanwei Li, Ziwei Liu, et~al.
\newblock Llava-onevision: Easy visual task transfer.
\newblock {\em arXiv preprint arXiv:2408.03326}, 2024.

\bibitem{wang2024qwen2}
Peng Wang, Shuai Bai, Sinan Tan, Shijie Wang, Zhihao Fan, Jinze Bai, Keqin Chen, Xuejing Liu, Jialin Wang, Wenbin Ge, et~al.
\newblock Qwen2-vl: Enhancing vision-language model's perception of the world at any resolution.
\newblock {\em arXiv preprint arXiv:2409.12191}, 2024.

\bibitem{bai2025qwen2}
Shuai Bai, Keqin Chen, Xuejing Liu, Jialin Wang, Wenbin Ge, Sibo Song, Kai Dang, Peng Wang, Shijie Wang, Jun Tang, et~al.
\newblock Qwen2. 5-vl technical report.
\newblock {\em arXiv preprint arXiv:2502.13923}, 2025.

\bibitem{chen2024we}
Lin Chen, Jinsong Li, Xiaoyi Dong, Pan Zhang, Yuhang Zang, Zehui Chen, Haodong Duan, Jiaqi Wang, Yu~Qiao, Dahua Lin, et~al.
\newblock Are we on the right way for evaluating large vision-language models?
\newblock {\em arXiv preprint arXiv:2403.20330}, 2024.

\bibitem{lu2023mathvista}
Pan Lu, Hritik Bansal, Tony Xia, Jiacheng Liu, Chunyuan Li, Hannaneh Hajishirzi, Hao Cheng, Kai-Wei Chang, Michel Galley, and Jianfeng Gao.
\newblock Mathvista: Evaluating mathematical reasoning of foundation models in visual contexts.
\newblock {\em arXiv preprint arXiv:2310.02255}, 2023.

\bibitem{fan2024mmfactory}
Wan-Cyuan Fan, Tanzila Rahman, and Leonid Sigal.
\newblock Mmfactory: A universal solution search engine for vision-language tasks.
\newblock {\em arXiv preprint arXiv:2412.18072}, 2024.

\bibitem{rafailov2024direct}
Rafael Rafailov, Archit Sharma, Eric Mitchell, Christopher~D Manning, Stefano Ermon, and Chelsea Finn.
\newblock Direct preference optimization: Your language model is secretly a reward model.
\newblock {\em Advances in Neural Information Processing Systems}, 36, 2024.

\bibitem{zhang2023mathematical}
Tong Zhang.
\newblock {\em Mathematical analysis of machine learning algorithms}.
\newblock Cambridge University Press, 2023.

\bibitem{durrett2019probability}
Rick Durrett.
\newblock {\em Probability: theory and examples}, volume~49.
\newblock Cambridge university press, 2019.

\end{thebibliography}

\clearpage

%%%%%%%%%%%%%%%%%%%%%%%%%%%%%%%%%%%%%%%%%%%%%%%%%%%%%%%%%%%%

\appendix

\newpage
\appendix
\appendixpage
\startcontents[sections]
\printcontents[sections]{l}{1}{\setcounter{tocdepth}{2}}
\newpage

\section{Proofs}
\label{sec:app:proofs}
In this section, we provide the detailed proof that is omitted in the main paper.

\subsection{Verifier Obtaining}\label{app:formula}
Given the base model $\texttt{V}_{\phi_0}$ and preference pairs $\left(s_1, \tau^{w}, \tau^{l}\right)$, the key idea of multi-step DPO (\citep{rafailov2024direct}; \citep{xiong2024building}) is to assume that $r^*$ has a specific structure under which the new preference model $\texttt{V}_{\phi_{\text{SDPO}}}$ and verifier $r^*$ can be obtained by jointly solving the following KL-regularized planning problem and the maximum likelihood estimation of the Bradley-Terry model.
\begin{align}
   &{\phi}_{\text{SDPO}} \in \text{argmin}_{\phi}\mathbb{E}_{H_{\tau}}\mathbb{E}_{s_1\sim\mathcal{D}, \left\{t_h^{}\sim \texttt{V}_{{\phi}}\left(\cdot \mid s_h^{} \right)\right\}_{h = 1}^{H_{\tau} + 1}, \left\{a_h^{}\sim \texttt{V}_{{\phi}}\left(\cdot \mid t_h^{} \right), o_h^{}= f_{a^{}_h} \left(t^{}_h\right)\right\}_{h = 1}^{H_{\tau}}}\Bigg[-r^*(\tau) +\nonumber\\&\qquad \qquad \qquad \qquad \left. \eta\sum_{h = 1}^{H_{\tau} + 1}D_{\text{KL}}\left( \texttt{V}_{{\phi}}\left(\cdot \mid s_h^{}\right)|| \texttt{V}_{\phi_0}\left(\cdot \mid s_h^{}\right)\right)- \eta\sum_{h = 1}^{H_{\tau}}D_{\text{KL}}\left( \texttt{V}_{{\phi}}\left(\cdot \mid t_h^{}\right)|| \texttt{V}_{\phi_0}\left(\cdot \mid t_h^{}\right)\right) \right], \label{eq:KL-original}\\
   &{\phi}_{\text{SDPO}} \in \text{argmin}_{\phi}  -\mathbb{E}_{\left(s_1, \tau^{w},\tau^{l}\right)} \left[\mathbb{P}\left(\tau^{w}\succ \tau^{l}\right) \right] = \text{argmin}_{\phi} -\mathbb{E}_{\left(s_1, \tau^{w},\tau^{l}\right)} \left[\sigma\left(r^*\left(\tau^{w} \right) - r^*\left(\tau^{l} \right) \right)\right]\label{eq:Bradley-Terry model}. 
\end{align}

Firstly, the next proposition shows that problem (\ref{eq:KL-original}) can be solved directly when $r^*$ lies in a specific family of one-parameter functions,.
\begin{proposition}\label{prop:backward induc}
    If $r^*(\tau) = r_{\phi'}(\tau) = \eta \sum_{h=1}^{H_{\tau} + 1}\log \frac{\texttt{V}_{{\phi'}}\left( t_h^{}\mid s_h^{}\right)}{\texttt{V}_{\phi_0}\left( t_h^{}\mid s_h^{}\right)} + \eta \sum_{h=1}^{H_{\tau}}\log \frac{\texttt{V}_{{\phi'}}\left( a_h^{}\mid t_h^{}\right)}{\texttt{V}_{\phi_0}\left( a_h^{}\mid t_h^{}\right)} + Q(s_1)$ for some $\phi'$, where $Q(\cdot)$ is some fixed function that only depends on $s_1$, then the optimal solution of problem (\ref{eq:KL-original}) is $\phi'$.
\end{proposition}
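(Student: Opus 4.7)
The plan is to substitute the explicit form $r^*=r_{\phi'}$ directly into the KL-regularized objective in (\ref{eq:KL-original}) and reduce it, via a standard reverse-KL identity, to a non-negative sum of KL divergences between $\texttt{V}_\phi$ and $\texttt{V}_{\phi'}$ plus a $\phi$-independent constant. Since the KL divergence is non-negative and vanishes only when the two distributions agree, the minimum is attained precisely at $\phi=\phi'$.

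Step-by-step, I would proceed as follows. For each reasoning step $h$, the tower property gives
$\mathbb{E}_{\tau\sim\texttt{V}_\phi}\bigl[\log(\texttt{V}_{\phi'}(t_h\mid s_h)/\texttt{V}_{\phi_0}(t_h\mid s_h))\bigr] = \mathbb{E}_{s_h}\bigl[D_{\text{KL}}(\texttt{V}_\phi(\cdot\mid s_h)\|\texttt{V}_{\phi_0}(\cdot\mid s_h)) - D_{\text{KL}}(\texttt{V}_\phi(\cdot\mid s_h)\|\texttt{V}_{\phi'}(\cdot\mid s_h))\bigr]$,
using the elementary identity $\mathbb{E}_p[\log(q/\pi_0)] = D_{\text{KL}}(p\|\pi_0) - D_{\text{KL}}(p\|q)$. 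The analogous decomposition applies to the action-level log-ratio $\log(\texttt{V}_{\phi'}(a_h\mid t_h)/\texttt{V}_{\phi_0}(a_h\mid t_h))$ after conditioning on $t_h$. Summing these identities over $h$, multiplying by the $-\eta$ that sits in front of $r^*(\tau)$, and adding back $-\mathbb{E}[Q(s_1)]$, the $D_{\text{KL}}(\texttt{V}_\phi\|\texttt{V}_{\phi_0})$ pieces produced in this way cancel against the KL regularization sums already present in the objective.

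What remains, after cancellation, is
$\eta\,\mathbb{E}\bigl[\sum_{h=1}^{H_\tau+1}D_{\text{KL}}(\texttt{V}_\phi(\cdot\mid s_h)\|\texttt{V}_{\phi'}(\cdot\mid s_h)) + \sum_{h=1}^{H_\tau}D_{\text{KL}}(\texttt{V}_\phi(\cdot\mid t_h)\|\texttt{V}_{\phi'}(\cdot\mid t_h))\bigr] - \mathbb{E}[Q(s_1)]$.
The second term does not depend on $\phi$, and each KL term in the first bracket is non-negative, so the objective is bounded below by $-\mathbb{E}[Q(s_1)]$; choosing $\phi=\phi'$ makes every KL in the sum zero and attains this bound. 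Under the (implicit) identifiability assumption that matching all reachable conditionals $\texttt{V}_\phi(\cdot\mid s) = \texttt{V}_{\phi'}(\cdot\mid s)$ and $\texttt{V}_\phi(\cdot\mid t) = \texttt{V}_{\phi'}(\cdot\mid t)$ forces $\phi=\phi'$, this identifies $\phi'$ as a (the) minimizer.

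The main subtlety is that the distribution of the whole trajectory, including the random horizon $H_\tau$ and the states $s_h, t_h$, is itself governed by the optimization variable $\phi$, which blocks any naive global change of measure. The way around this is to apply the reverse-KL identity step by step via iterated conditional expectations: the log-ratio at step $h$ depends only on $(s_h,t_h)$ or $(t_h,a_h)$, so conditioning on $s_h$ (resp.\ $t_h$) isolates exactly the inner distribution $\texttt{V}_\phi(\cdot\mid s_h)$ (resp.\ $\texttt{V}_\phi(\cdot\mid t_h)$) to which the identity applies. The measurability of $H_\tau$ as a stopping time (cf.\ Theorem~\ref{thm:finite stopping time}) then justifies exchanging the sum and the expectation, and completes the reduction.
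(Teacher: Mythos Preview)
Your argument is correct, but it takes a different route from the paper's. The paper sets up value functions $Q_h(s_h,\phi)$ and proves by backward induction that $\phi'\in\operatorname*{argmin}_\phi Q_h(s_h,\phi)$ for every $h$, invoking the Gibbs-distribution lemma (Lemma~\ref{lem:Gibbs dis}) at each step to identify the minimizing conditional. Your approach instead substitutes $r^*=r_{\phi'}$ globally and uses the elementary identity $\mathbb{E}_p[\log(q/\pi_0)]=D_{\text{KL}}(p\|\pi_0)-D_{\text{KL}}(p\|q)$ step by step, which collapses the whole objective in one shot to $\eta\,\mathbb{E}\bigl[\sum_h D_{\text{KL}}(\texttt{V}_\phi\|\texttt{V}_{\phi'})\bigr]$ plus a $\phi$-independent constant. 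This is really the same underlying fact (your identity is the special case $U=\eta\log(q/\pi_0)$ of the Gibbs lemma), but your packaging avoids the value-function scaffolding and the induction entirely, making the argument shorter and more transparent. The paper's version, in return, yields the slightly stronger pointwise statement that $\phi'$ minimizes $Q_h(s_h,\cdot)$ for \emph{every} reachable $s_h$, not just in expectation over trajectories. Two small remarks: the paper simply fixes $H_\tau$ at the outset (``given any $s_1$ and $H_\tau$''), so your appeal to the stopping-time property of Theorem~\ref{thm:finite stopping time} is not really needed here; and your identifiability caveat about uniqueness is appropriate, since the paper's proof, like yours, only establishes $\phi'\in\operatorname*{argmin}$.
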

Hence if $r^*$ takes the form $r_{\phi'}$, 
 by Proposition \ref{prop:backward induc} and equation (\ref{eq:KL-original}), we have $r^* = r_{\phi_{\text{DPO}}}$. Then to further obtain ${\phi}_{\text{SDPO}}$, we can plug $r_{\phi'}(\tau)$ in equation (\ref{eq:Bradley-Terry model}) and optimize over $\phi'$, i.e.,  
\begin{align}
    {\phi}_{\text{SDPO}}  \in &\text{argmin}_{\phi'} -\mathbb{E}_{\left(s_1, \tau^{w},\tau^{l}\right)} \left[\sigma\left(r_{\phi'}^*\left(\tau^{w} \right) - r_{\phi'}^*\left(\tau^{l} \right) \right)\right] \nonumber\\= &\text{argmin}_{\phi'} -\mathbb{E}_{\left(s_1, \tau^{w},\tau^{l}\right)} \left[\log\ \sigma\left(\eta \left(\sum_{h = 1}^{H_{\tau^w} + 1}\log \frac{\texttt{V}_{\phi'}\left(t^{\tau^{w}}_h \mid s^{\tau^{w}}_h\right)}{\texttt{V}_{\phi_0}\left(t^{\tau^{w}}_h \mid s^{\tau^{w}}_h\right)} + \sum_{h = 1}^{H_{\tau^w}}\log \frac{\texttt{V}_{\phi'}\left(a^{\tau^{w}}_h \mid t^{\tau^{w}}_h\right)}{\texttt{V}_{\phi_0}\left(a^{\tau^{w}}_h \mid t^{\tau^{w}}_h\right)}\right.\right.\right. 
    \nonumber\\& \left.\left.\left. \qquad \qquad \qquad \qquad \qquad \qquad -  \sum_{h = 1}^{H_{\tau^l} + 1}\log \frac{\texttt{V}_{\phi'}\left(t^{\tau^{l}}_h \mid s^{\tau^{l}}_h\right)}{\texttt{V}_{\phi_0}\left(t^{\tau^{l}}_h \mid s^{\tau^{l}}_h\right)} - \sum_{h = 1}^{H_{\tau^l}}\log \frac{\texttt{V}_{\phi'}\left(a^{\tau^{l}}_h \mid t^{\tau^{l}}_h\right)}{\texttt{V}_{\phi_0}\left(a^{\tau^{l}}_h \mid t^{\tau^{l}}_h\right)}\right)\right)\right].\label{eq:multi-step dpo loss popu-version}
\end{align}
Here we use superscript to indicate which trajectory path the reasoning steps lie in. Now we can use the following empirical multi-step DPO loss
\begin{align}
    \mathcal{L}_{\text{SDPO}}\left(\phi, \phi_0\right) 
    &=- \sum_{\left(s_1, \tau^{w}, \tau^{l}\right) \in \mathcal{D}_{\text{DPO}}} \log\ \sigma\left(\eta \left(\sum_{h = 1}^{H_{\tau^w} + 1}\log \frac{\texttt{V}_{\phi'}\left(t^{\tau^{w}}_h \mid s^{\tau^{w}}_h\right)}{\texttt{V}_{\phi_0}\left(t^{\tau^{w}}_h \mid s^{\tau^{w}}_h\right)} + \sum_{h = 1}^{H_{\tau^w}}\log \frac{\texttt{V}_{\phi'}\left(a^{\tau^{w}}_h \mid t^{\tau^{w}}_h\right)}{\texttt{V}_{\phi_0}\left(a^{\tau^{w}}_h \mid t^{\tau^{w}}_h\right)}\right.\right. \nonumber 
    \\& \left.\left. \qquad \qquad \qquad \qquad \qquad \quad  -  \sum_{h = 1}^{H_{\tau^l} + 1}\log \frac{\texttt{V}_{\phi'}\left(t^{\tau^{l}}_h \mid s^{\tau^{l}}_h\right)}{\texttt{V}_{\phi_0}\left(t^{\tau^{l}}_h \mid s^{\tau^{l}}_h\right)} - \sum_{h = 1}^{H_{\tau^l}}\log \frac{\texttt{V}_{\phi'}\left(a^{\tau^{l}}_h \mid t^{\tau^{l}}_h\right)}{\texttt{V}_{\phi_0}\left(a^{\tau^{l}}_h \mid t^{\tau^{l}}_h\right)}\right)\right), \label{eq:multi-step dpo loss}
\end{align}
and obtain the optimal $\hat{\phi}_{\text{SDPO}}\in \text{argmin}_{\phi}\mathcal{L}_{\text{SDPO}}\left(\phi, \phi_0\right) $ by gradient descent. Finally, we use $r_{\hat{\phi}_{\text{SDPO}}}$ to approximate the verifier $r^* = r_{{\phi}_{\text{SDPO}}}$.

\begin{proof}
    [\textbf{Proof of Proposition \ref{prop:backward induc}}]
    The key idea is, given any $s_1$ and $H_{\tau}$, using backward induction to check the optimality of $\phi'$ for  $1\leq h \leq H_{\tau} + 1$.
    For any given $s^{}_h$, define the value function $Q_h\left(s^{}_{h},\phi\right)$ as
\begin{align*}
   Q_h\left(s^{}_{h},\phi\right) 
   =& \mathbb{E}_{\left\{t_{h'}^{}\sim \texttt{V}_{{\phi}}\left(\cdot \mid s_{h'}^{} \right)\right\}_{h' = h}^{H_{\tau} + 1}, \left\{a_{h'}^{}\sim \texttt{V}_{{\phi}}\left(\cdot \mid t_{h'}^{} \right), o_{h'}^{}= f_{a^{}_{h'}} \left(t^{}_{h'}\right)\right\}_{h' = h}^{H_{\tau}}}\left[-\eta\sum_{h' = h}^{H_{\tau} + 1}\log \frac{\texttt{V}_{{\phi'}}\left( t_{h'}^{}\mid s_{h'}^{}\right)}{\texttt{V}_{\phi_0}\left( t_{h'}^{}\mid s_{h'}^{}\right)}\right.\\ &+\eta\sum_{h' = h}^{H_{\tau} + 1}D_{\text{KL}}\left( \texttt{V}_{{\phi}}\left(\cdot \mid s_{h'}^{}\right)|| \texttt{V}_{{\phi_0}}\left(\cdot \mid s_{h'}^{}\right)\right)\nonumber \\ &\left.+ \eta\mathbf{1}\left\{h\leq H_{\tau}\right\}\sum_{h' = h}^{H_{\tau}}\left(-\log \frac{\texttt{V}_{{\phi'}}\left( a_{h'}^{}\mid t_{h'}^{}\right)}{\texttt{V}_{\phi_0}\left( a_{h'}^{}\mid t_{h'}^{}\right)} + D_{\text{KL}}\left( \texttt{V}_{{\phi}}\left(\cdot \mid t_{h'}^{}\right)|| \texttt{V}_{{\phi_0}}\left(\cdot \mid t_{h'}^{}\right)\right)\right) \right]\\
   &- \mathbf{1}\left\{h=1\right\} Q(s_1).
\end{align*}
Then our goal is to show that  
\begin{align}\label{eq:equivalent opt}
{\phi'} \in \text{argmin}_{\phi}\mathbb{E}_{H_{\tau}}\mathbb{E}_{s_1\sim\mathcal{D}}Q_1(s_1,\phi).
\end{align}
Observe that by the construction, $Q_h(\cdot,\phi)$ satisfies the following recurrence formula
\begin{align}
    &Q_{h}\left(s^{}_{h},\phi\right)\nonumber\\ =& \mathbb{E}_{t_{h}^{}\sim \texttt{V}_{{\phi}}\left(\cdot \mid s_{h}^{} \right), a_{h}^{}\sim \texttt{V}_{{\phi}}\left(\cdot \mid t_{h}^{} \right), o_{h}^{}= f_{a^{}_{h+1}} \left(t^{}_{h}\right)}Q_{h+1}\left(s^{}_{h+1},\phi\right) - \mathbf{1}\left\{h=1\right\} Q(s_1) \nonumber\\
    &+ \eta\mathbb{E}_{t_{h}^{}\sim \texttt{V}_{{\phi}}\left(\cdot \mid s_{h}^{} \right), a_{h}^{}\sim \texttt{V}_{{\phi}}\left(\cdot \mid t_{h}^{} \right), o_{h}^{}= f_{a^{}_{h+1}} \left(t^{}_{h}\right)}\left[-\log \frac{\texttt{V}_{{\phi'}}\left( t_{h}^{}\mid s_{h}^{}\right)}{\texttt{V}_{\phi_0}\left( t_{h}^{}\mid s_{h}^{}\right)} + D_{\text{KL}}\left( \texttt{V}_{{\phi}}\left(\cdot \mid s_{h}^{}\right)|| \texttt{V}_{{\phi_0}}\left(\cdot \mid s_{h}^{}\right)\right)\right]\label{eq:value_second_term}\\
    &+ \eta\mathbf{1}\left\{h\leq H_{\tau}\right\}\mathbb{E}_{t_{h}^{}\sim \texttt{V}_{{\phi}}\left(\cdot \mid s_{h}^{} \right), a_{h}^{}\sim \texttt{V}_{{\phi}}\left(\cdot \mid t_{h}^{} \right), o_{h}^{}= f_{a^{}_{h+1}} \left(t^{}_{h}\right)}\left[-\log \frac{\texttt{V}_{{\phi'}}\left( a_{h}^{}\mid t_{h}^{}\right)}{\texttt{V}_{\phi_0}\left( a_{h}^{}\mid t_{h}^{}\right)} + D_{\text{KL}}\left( \texttt{V}_{{\phi}}\left(\cdot \mid t_{h}^{}\right)|| \texttt{V}_{{\phi_0}}\left(\cdot \mid t_{h}^{}\right)\right)\right]\label{eq:value_third_term}.
\end{align}
We now use backward induction to show that for any fixed $s_h^{}$, $1\leq h \leq H_{\tau} + 1$, $\phi' \in \text{argmin}_{\phi} Q_h\left(s_h^{},\phi\right)$.  

    (a). For $h = H_{\tau} + 1$,
    {\small
  \begin{align*} Q_{H_{\tau} + 1}\left(s^{}_{H_{\tau} + 1},\phi\right) = \mathbb{E}_{t_{H_{\tau} + 1}^{}\sim \texttt{V}_{{\phi}}\left(\cdot \mid s^{}_{H_{\tau} + 1} \right)}\left[-\eta\log\frac{\texttt{V}_{{\phi}'}\left(t^{}_{H_{\tau} + 1} \mid s^{}_{H_{\tau} + 1} \right)}{\texttt{V}_{\phi_0}\left(t^{}_{H_{\tau} + 1} \mid s^{}_{H_{\tau} + 1} \right)} + \eta D_{\text{KL}}\left( \texttt{V}_{{\phi}}\left(\cdot \mid s_{H_{\tau} + 1}^{}\right)|| \texttt{V}_{{\phi_0}}\left(\cdot \mid s_{H_{\tau} + 1}^{}\right)\right)\right]. 
    \end{align*}}
    
By directly using Lemma \ref{lem:Gibbs dis}, we can conclude that $Q_{H_{\tau} + 1}\left(s^{}_{H_{\tau} + 1},\phi\right)\geq 0$, where the equality holds when $\texttt{V}_{{\phi}}\left(\cdot \mid s_{H_{\tau} + 1}^{}\right) = \texttt{V}_{{\phi'}}\left(\cdot \mid s_{H_{\tau} + 1}^{}\right)$. Hence $\phi' \in \text{argmin}_{\phi} Q_{H_{\tau} + 1}\left(s_{H_{\tau} + 1}^{},\phi\right)$.

(b). Assume for $h = h'$, $2\leq h' \leq H_{\tau} + 1$, $\phi' \in \text{argmin}_{\phi} Q_{h'}\left(s_{h'}^{},\phi\right)$. Then for $h = h' - 1$, 
\begin{align*}
    \phi' \in \text{argmin}_{\phi}\mathbb{E}_{t_{h'-1}^{}\sim \texttt{V}_{{\phi}}\left(\cdot \mid s_{h'-1}^{} \right), a_{h'-1}^{}\sim \texttt{V}_{{\phi}}\left(\cdot \mid t_{h'-1}^{} \right), o_{h'-1}^{}= f_{a^{}_{h'}} \left(t^{}_{h'-1}\right)}Q_{h'}\left(s^{}_{h'},\phi\right).
\end{align*}
To show $\phi' \in \text{argmin}_{\phi} Q_{h'-1}\left(s_{h'-1}^{},\phi\right)$, by the definition of $Q_{h'-1}\left(s_{h'-1}^{},\phi\right)$, one only needs to show that both term (\ref{eq:value_second_term}) and term (\ref{eq:value_third_term}) obtain the minimum at $\phi'$ when substituting $h$ to $h'-1$. This can be checked by using conditional expectation and Lemma \ref{lem:Gibbs dis}.

Hence by combining (a) and (b), we have $\phi' \in \text{argmin}_{\phi}Q_1(s_1,\phi)$, which implies equation (\ref{eq:equivalent opt}), as desired. 
\end{proof}

\begin{lemma}
    [Proposition 7.16 and Theorem 15.3 of \citep{zhang2023mathematical}]\label{lem:Gibbs dis}
        Let $U(\cdot)$ be a given function and $p_0(\cdot)$ be a given density function. Then the solution of 
        \begin{align*}
         \text{argmin}_{p(\cdot)}\mathbb{E}_{x\sim p(\cdot)}\left[-U(x) + \eta D_{\text{KL}}\left(p(\cdot)|| p_0(\cdot)\right)\right]  \end{align*}
    is given by 
    \begin{align*}
        p^*(x) = \frac{1}{C}p_0(x)\exp\left(\frac{U(x)}{\eta}\right),
    \end{align*}
    where $C = \log \mathbb{E}_{x\sim p_0(\cdot)}\exp\left(\frac{U(x)}{\eta}\right)$. $p^*(\cdot)$ is known as the Gibbs distribution.
    \end{lemma}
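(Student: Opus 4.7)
The plan is to prove the variational characterization by the standard \emph{completion-of-KL} identity: rewrite the regularized objective as a KL divergence against the conjectured Gibbs minimizer plus a $p$-independent constant, and then invoke the non-negativity of KL divergence (with equality iff the two densities coincide) to read off both existence and uniqueness of the minimizer. This avoids any explicit Lagrange-multiplier or functional-derivative calculation and keeps the entire argument at the level of a single algebraic identity.

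First I would set $Z \coloneqq \mathbb{E}_{x \sim p_0}[\exp(U(x)/\eta)]$ (the normalizing constant; the paper's $C$ should be read as $Z$ rather than $\log Z$ in order for the stated density to integrate to one) and define $p^*(x) \coloneqq p_0(x)\exp(U(x)/\eta)/Z$, which is a probability density by construction. For any competing density $p$ with $p \ll p_0$, I would then verify the one-step identity
\begin{align*}
\mathbb{E}_{x \sim p}\!\left[-U(x) + \eta \log\frac{p(x)}{p_0(x)}\right]
&= \eta\,\mathbb{E}_{x \sim p}\!\left[\log\frac{p(x)}{p_0(x)\exp(U(x)/\eta)}\right] \\
&= \eta\, D_{\text{KL}}\!\bigl(p \,\|\, p^*\bigr) - \eta \log Z,
\end{align*}
where the second line uses $p_0(x)\exp(U(x)/\eta) = Z\,p^*(x)$ to split off the $\log Z$ contribution from the logarithm.

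Since $-\eta \log Z$ does not depend on $p$, minimizing the original objective over densities $p$ is equivalent to minimizing $D_{\text{KL}}(p \,\|\, p^*)$, which is nonnegative and vanishes if and only if $p = p^*$ almost everywhere. Hence $p^*$ is the unique minimizer, establishing the claimed Gibbs form. The only technical point — and really the sole place where the argument could break down — is verifying that $Z < \infty$ so that $p^*$ is well defined, and that admissible $p$ are absolutely continuous with respect to $p_0$ so the KL quantities are meaningful. Both conditions are automatic whenever $U$ is bounded on the support of $p_0$, which holds in the discrete, finite-action setting in which this lemma is invoked during the proof of Proposition~\ref{prop:backward induc}, so no substantial obstacle arises beyond the single rewriting step described above.
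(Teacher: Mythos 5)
Your argument is correct. Note, however, that the paper does not prove this lemma at all: it is imported by citation (Proposition 7.16 and Theorem 15.3 of the referenced textbook) and used as a black box inside the backward-induction proof of Proposition~\ref{prop:backward induc}, so there is no in-paper proof to match against. Your completion-of-KL derivation is the standard self-contained argument for this fact: the identity $-U(x)+\eta\log\frac{p(x)}{p_0(x)}=\eta\log\frac{p(x)}{Z\,p^*(x)}$ gives $\mathbb{E}_{x\sim p}\bigl[-U(x)\bigr]+\eta D_{\text{KL}}(p\,\|\,p_0)=\eta D_{\text{KL}}(p\,\|\,p^*)-\eta\log Z$, and nonnegativity of the KL divergence (with equality iff $p=p^*$ a.e.) yields both optimality and uniqueness, which is slightly more than the citation itself asserts. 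You are also right that the statement as printed contains a typo: for $p^*$ to integrate to one the normalizer must be $C=\mathbb{E}_{x\sim p_0}\exp\bigl(U(x)/\eta\bigr)$, not its logarithm. Your closing remarks on well-posedness ($Z<\infty$ and $p\ll p_0$) are the right caveats and are indeed harmless in the finite, discrete setting in which the lemma is invoked, so the proof stands as a valid replacement for the external citation.
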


\subsection{Proof of Lemma \ref{lem:stopping rule}}\label{app:proof of lem of stopping time}
\begin{proof}[Proof of Lemma \ref{lem:stopping rule}]
This is directly obtained from the definition of $r_{\hat{\phi}_{\text{SDPO}}}$.    
\end{proof}

\subsection{Practical Algorithm for VTS}
% \begin{algorithm}[H]
%   \caption{Visual Reasoning with Visual Token Scaling and Verification}
%   % \hspace*{0.02in} 
%   {\bf Require} A reasoner $\texttt{R}_{\theta_0}$, a verifier  $r^*$. \\
%   \hspace*{0.02in} 
%   {\bf Input:} A question-image pair $(x, I)$.\\
%   \hspace*{0.02in} 
%   {\bf Output:} Reasoning trajectory $\tau$. \\
%   \vspace{-0.4cm}
%   \begin{algorithmic}[1]
%     \State Sample $(x, I) \sim \mathcal{D}$ to initialize State $s_1$.
%     \State Initialize reasoning trajectory $\tau \gets \{s_1\}$.
%     \State Set step counter $h \gets 1$.
%     \While{True}
%         \State \textbf{Planning:} Generate step planning $t_h \sim p(\cdot\mid s_h)$.
%         \State \textbf{Action:} Select action $a_h \sim \pi(\cdot\mid t_h)$.
%         \State \textbf{Observation:} Compute $o_h = f_{a_h}(t_h)$.
%         \State Update State: $s_{h+1} \gets (s_h, t_h, a_h, o_h)$.
%         \State Append $(t_h, a_h, o_h)$ to trajectory $\tau$.
%         \State \textbf{Verification:} Compute reward difference $\Delta r = r^*(\tau_{h+1}) - r^*(\tau_h)$.
%         \If{$\Delta r \leq \epsilon$}
%             \State Break loop.
%         \EndIf
%         \State $h \gets h + 1$.
%     \EndWhile
%     \State Return final trajectory $\tau$ and final result.
%     \label{alg:vts-v}
%   \end{algorithmic}
% \end{algorithm}

\begin{algorithm}[H]
  \caption{Visual Reasoning with Visual Token Scaling and Verification}\label{alg:training and inference}
  % \hspace*{0.02in} 
  {\bf Require} Training dataset $\mathcal{D}_{\text{train}}$, initial reasoner $\texttt{R}_{{\theta}_{0}}$ and base model $\texttt{V}_{{\phi}_{0}}$. \\
  \hspace*{0.02in} 
  {\bf Input:} A new test question-image pair $s_1 \sim \mathcal{D}$.\\
  \hspace*{0.02in} 
  {\bf Output:} Reasoning trajectory $\tau$. \\
  \vspace{-0.4cm}
  \begin{algorithmic}[1]
    % \State \textbf{Training Dataset Construction:} Build SFT dataset $\mathcal{D}_{\text{SFT}}$ (only when $\theta_0$ is known) and DPO dataset $\mathcal{D}_{\text{DPO}}$ based on initial training dataset $\mathcal{D}_{\text{train}}$.
    % \State \textbf{Reasoner Updating:} If $\theta_0$ is unknown, let $\hat{\theta}_{\text{SFT}} = \theta_0$. Otherwise, update $\hat{\theta}_{\text{SFT}}$ by using the gradient descent equation (\ref{eq:gradient descent of SFT}) under dataset $\mathcal{D}_{\text{SFT}}$, where the loss $\mathcal{L}_{\text{SFT}}$ is given in equation (\ref{eq:SFT loss}).
    % \State \textbf{Verifier Obtaining:} Update $\hat{\phi}_{\text{SDPO}}$ by using the gradient descent equation (\ref{eq:gradient descent of DPO loss}) under dataset $\mathcal{D}_{\text{DPO}}$, where the loss $\mathcal{L}_{\text{SDPO}}$ is given in equation (\ref{eq:multi-step dpo loss}).
    \State \textbf{Reasoning Sequence Generation:} Initialize reasoning trajectory $\tau \gets \{s_1\}$.
    \State Set step counter $h \gets 1$.
    \While{True}
        \State \textbf{Planning:} Generate step planning $t_h \sim \texttt{R}_{\hat{\theta}_{\text{SFT}}}\left(\cdot\mid s_{h}\right)$.
        \State \textbf{Action:} Select action $a_h \sim \texttt{R}_{\hat{\theta}_{\text{SFT}}}\left(\cdot\mid t_{h}\right)$.
        \State \textbf{Observation:} Return $o_h = f_{a_h}\left(t_h\right)$.
        \State Update State: $s_{h+1} \gets \left(s_h, t_h, a_h, o_h\right)$.
        \State Append $\left(t_h, a_h, o_h\right)$ to reasoning trajectory $\tau$.
        \State \textbf{Verification:} Compute reward difference $\Delta r = r_{\hat{\phi}_{\text{SDPO}}}\left(s_{h+1}\right) - r_{\hat{\phi}_{\text{SDPO}}}\left(s_h\right)$.
        \If{$\Delta r < \epsilon$}
            \State Break loop.
        \EndIf
        \State $h \gets h + 1$.
    \EndWhile
    \State Generate the final result $t_{H_{\tau}\left(\hat{\phi}_{\text{SDPO}}\right)+1} \sim \texttt{R}_{\hat{\theta}_{\text{SFT}}}\left(\cdot\ \Big|\  s_{H_{\tau}\left(\hat{\phi}_{\text{SDPO}}\right)+1}\right)$, where $H_{\tau}\left(\hat{\phi}_{\text{SDPO}}, \phi_0; \hat{\theta}_{\text{SFT}}\right)$ is the total reasoning step.
    \State Return final trajectory $\tau = s_{H_{\tau}\left(\hat{\phi}_{\text{SDPO}}\right)+1} \cup \left\{t_{H_{\tau}\left(\hat{\phi}_{\text{SDPO}}\right)+1}\right\}$.
    \label{alg:vts-v}
  \end{algorithmic}
\end{algorithm}

\subsection{Proof of Theorem \ref{thm:finite stopping time}}\label{app:proof of finite stopping time}
Before we proceed with the main proof, we provide some relevant definitions and conclusions at the beginning.
\begin{definition}
[Martingale, supermartingale, and submartingale] Let $\mathcal{F}_n$ be a filtration, i.e., an increasing sequence of $\sigma$-fields. A sequence $X_n$ is said to be adapted to $\mathcal{F}_n$ if $X_n \in \mathcal{F}_n$ for all $n$. If $X_n$ is a sequence with 
\begin{enumerate}
    \item $\mathbb{E}|X_n|<\infty$,
    \item $X_n$ is adapted to $\mathcal{F}_n$,
    \item $\mathbb{E}\left[X_{n+1}\mid \mathcal{F}_n\right] = X_n$ for all $n$,
\end{enumerate}
then $X_n$ is said to be a martingale (with respect to $\mathcal{F}_n$). If in the last definition, $=$ is replaced by $\leq$ or $\geq$, then $X_n$ is said to be a supermartingale or submartingale, respectively.  

\begin{definition}
[Stopping time] A random variable $N$ is said to be a stopping time if $\left\{N=n\right\}\in \mathcal{F}_n$ for all $n<\infty$, i.e., the decision to stop at time $n$ must be measurable with respect to the information known at that time.   
\end{definition}
\end{definition}

\begin{theorem}
    [Martingale convergence theorem, Theorem 4.2.11 of \citep{durrett2019probability}]\label{thm:Martingale convergence theorem} If $X_n$ is a submartingale with $\sup \ \mathbb{E}X_n^+<\infty$, then as $n\rightarrow\infty$, $X_n$ converges a.s. to a limit X with $\mathbb{E}|X|<\infty$.
\end{theorem}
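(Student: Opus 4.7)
The plan is to prove the Martingale Convergence Theorem via Doob's upcrossing inequality, which is the standard route. For any real numbers $a<b$, let $U_n([a,b])$ denote the number of upcrossings of the interval $[a,b]$ completed by the sequence $X_1,\ldots,X_n$, i.e.\ the number of times the process passes from a value $\leq a$ up to a value $\geq b$. Almost sure convergence will follow by showing that the total number of upcrossings $U_\infty([a,b]) := \lim_n U_n([a,b])$ is finite almost surely for every rational pair $a<b$, because otherwise $\liminf X_n < a < b < \limsup X_n$ on a set of positive probability.

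The key step is to establish Doob's upcrossing inequality: for a submartingale $X_n$,
\begin{equation*}
(b-a)\,\mathbb{E}\bigl[U_n([a,b])\bigr] \leq \mathbb{E}\bigl[(X_n-a)^+\bigr] - \mathbb{E}\bigl[(X_1-a)^+\bigr].
\end{equation*}
I would prove this by a predictable ``gambling'' argument. Define a $\{0,1\}$-valued predictable process $H_k$ that equals $1$ exactly when the process is in the midst of an upcrossing attempt (recursively: $H_k=1$ on $\{H_{k-1}=1,\,X_{k-1}\leq b\}\cup\{H_{k-1}=0,\,X_{k-1}\leq a\}$, and $0$ otherwise). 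Applying this predictable strategy to the submartingale $Y_k = (X_k-a)^+$, one checks that every completed upcrossing contributes at least $b-a$ to the gain $\sum_{k\leq n} H_k(Y_k-Y_{k-1})$, while any incomplete upcrossing contributes nonnegatively. Since $H$ is nonnegative, predictable, and bounded, $H\cdot Y$ is a submartingale, so its expected gain is at most $\mathbb{E}[Y_n]-\mathbb{E}[Y_1]$, yielding the inequality.

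With the upcrossing inequality in hand, the hypothesis $\sup_n\mathbb{E}[X_n^+]<\infty$ gives $\sup_n \mathbb{E}[(X_n-a)^+] \leq |a| + \sup_n\mathbb{E}[X_n^+] < \infty$. By monotone convergence in $n$, $\mathbb{E}[U_\infty([a,b])]<\infty$, so $U_\infty([a,b])<\infty$ a.s. Taking a countable union over rational pairs $a<b$, the event $\{\liminf X_n < \limsup X_n\}$ has probability zero, so $X_n\to X$ a.s.\ for some extended-real random variable $X$.

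Finally, to show $\mathbb{E}|X|<\infty$ (and hence $X$ is finite a.s.), I would use $|X_n| = 2X_n^+ - X_n$ together with the submartingale property $\mathbb{E}[X_n]\geq \mathbb{E}[X_1]$ to bound $\mathbb{E}|X_n| \leq 2\mathbb{E}[X_n^+] - \mathbb{E}[X_1]$ uniformly in $n$. Fatou's lemma then gives $\mathbb{E}|X| \leq \liminf_n \mathbb{E}|X_n| <\infty$. The main obstacle, and the only genuinely nontrivial step, is setting up the predictable gambling strategy and carefully verifying that each upcrossing forces a gain of at least $b-a$; the remainder of the argument is a routine combination of monotone convergence, a countable union over rationals, and Fatou's lemma.
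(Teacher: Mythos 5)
The paper does not prove this statement at all---it is quoted verbatim as Theorem 4.2.11 of Durrett and used as a black box---and your proposal reconstructs precisely the standard upcrossing-inequality proof given in that source, so it is correct and matches the intended argument. One small fix in your sketch: the upper bound $\mathbb{E}\bigl[(H\cdot Y)_n\bigr]\leq \mathbb{E}[Y_n]-\mathbb{E}[Y_1]$ does not follow merely from $H\cdot Y$ being a submartingale; it follows by applying the submartingale property to the complementary predictable strategy $1-H$, whose expected gain is nonnegative, and then using $(H\cdot Y)_n+((1-H)\cdot Y)_n=Y_n-Y_1$.
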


\begin{theorem}
[Formal version of Theorem \ref{thm:finite stopping time}]\label{thm:formal version of finite stopping time}
\hspace{10em}
\begin{enumerate}
    \item $H_{\tau}\left(\hat{\phi}_{\text{SDPO}}, \phi_0; \hat{\theta}_{\text{SFT}}\right)$ is a stopping time.
    \item  Assume the following two conditions hold.
    \begin{enumerate}
        \item[(i)]
        $\sup \ \mathbb{E}_{\tau_h\setminus \{s_1\}\sim \texttt{R}_{{\hat{\theta}_{\text{SFT}}}}\left( \cdot\mid s_1\right)} r^{+}_{\hat{\phi}_{\text{SDPO}}}(\tau_h) < \infty$ for any given $s_1\sim \mathcal{D}$,
        \item[(ii)] for any $s_{h+1}^{}$, $\texttt{R}_{{\hat{\theta}_{\text{SFT}}}}$, $\texttt{V}_{{{\phi}_{0}}}$ and $\texttt{V}_{{\hat{\phi}_{\text{SDPO}}}}$ satisfy that 
        \[
         D_{\text{KL}}\left(\texttt{R}_{{\hat{\theta}_{\text{SFT}}}}\left( \cdot\mid s^{}_{h}\right)|| \texttt{V}_{{{\phi}_{0}}}\left( \cdot\mid s^{}_{h}\right)\right) \geq D_{\text{KL}}\left(\texttt{R}_{{\hat{\theta}_{\text{SFT}}}}\left( \cdot\mid s^{}_{h}\right)|| \texttt{V}_{{\hat{\phi}_{\text{SDPO}}}}\left( \cdot\mid s^{}_{h}\right)\right),
         \] and
        \[
        D_{\text{KL}}\left(\texttt{R}_{{\hat{\theta}_{\text{SFT}}}}\left( \cdot\mid t^{}_h\right)|| \texttt{V}_{{{\phi}_{0}}}\left( \cdot\mid t^{}_h\right)\right)\geq  D_{\text{KL}}\left(\texttt{R}_{{\hat{\theta}_{\text{SFT}}}}\left( \cdot\mid t^{}_h\right)|| \texttt{V}_{{\hat{\phi}_{\text{SDPO}}}}\left( \cdot\mid t^{}_h\right)\right)
        \].
 \end{enumerate}
Then $H_{\tau}\left(\hat{\phi}_{\text{SDPO}}, \phi_0; \hat{\theta}_{\text{SFT}}\right)$ is finite with probability 1.
\end{enumerate}
\end{theorem}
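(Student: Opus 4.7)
The plan is to split the claim into its two parts and attack them independently. For part~1, the stopping-time property is purely a measurability statement, so the key observation is that the event $\{H_\tau = h\}$ belongs to the natural filtration $\mathcal{F}_h := \sigma(s_1, t_1, a_1, o_1, \ldots, t_h, a_h, o_h)$ generated by the reasoner. By Lemma~\ref{lem:stopping rule}, termination at step $h$ occurs exactly when
$\bigl|\log(\texttt{V}_{\hat{\phi}_{\text{SDPO}}}(t_h\mid s_h)/\texttt{V}_{\phi_0}(t_h\mid s_h)) + \log(\texttt{V}_{\hat{\phi}_{\text{SDPO}}}(a_h\mid t_h)/\texttt{V}_{\phi_0}(a_h\mid t_h))\bigr| < \epsilon$
while the analogous quantity fails at all earlier indices; each such condition is a deterministic function of variables in the corresponding $\mathcal{F}_{h'}$, so $\{H_\tau = h\} \in \mathcal{F}_h$ and the stopping-time conclusion follows.

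For part~2, I would set $X_h := r_{\hat{\phi}_{\text{SDPO}}}(s_h)$ and argue that $(X_h)_{h \geq 1}$ is a submartingale with respect to $(\mathcal{F}_h)$ under the inference-time dynamics. From the definition of $r_{\hat{\phi}_{\text{SDPO}}}$ in \ref{eq:verifier}, the one-step increment equals $\eta \log(\texttt{V}_{\hat{\phi}_{\text{SDPO}}}(t_h\mid s_h)/\texttt{V}_{\phi_0}(t_h\mid s_h)) + \eta \log(\texttt{V}_{\hat{\phi}_{\text{SDPO}}}(a_h\mid t_h)/\texttt{V}_{\phi_0}(a_h\mid t_h))$; the terminal-planning term and the constant $Q(s_1)$ cancel in the difference $X_{h+1} - X_h$. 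Since during inference $t_h \sim \texttt{R}_{\hat{\theta}_{\text{SFT}}}(\cdot\mid s_h)$ and $a_h \sim \texttt{R}_{\hat{\theta}_{\text{SFT}}}(\cdot\mid t_h)$, I would apply the elementary identity $\mathbb{E}_{x\sim p}[\log(q(x)/q_0(x))] = D_{\text{KL}}(p\Vert q_0) - D_{\text{KL}}(p\Vert q)$ to each summand. Condition~(ii) then gives $\mathbb{E}[X_{h+1} - X_h \mid \mathcal{F}_h] \geq 0$ state-by-state, establishing the submartingale property.

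With the submartingale structure in hand, condition~(i) is exactly the hypothesis $\sup_h \mathbb{E}[X_h^+] < \infty$ required by Theorem~\ref{thm:Martingale convergence theorem}, so $X_h \to X_\infty$ almost surely for some integrable limit. Almost-sure convergence of $(X_h)$ forces $X_{h+1} - X_h \to 0$ almost surely, hence $|X_{h+1} - X_h| < \epsilon$ holds for all sufficiently large $h$ on a full-probability event. By Lemma~\ref{lem:stopping rule}, the algorithm must have halted by that time, and therefore $H_\tau$ is finite with probability one.

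The main obstacle will be handling the distributional mismatch between the inference-time sampling policy $\texttt{R}_{\hat{\theta}_{\text{SFT}}}$ and the two policies $\texttt{V}_{\phi_0}, \texttt{V}_{\hat{\phi}_{\text{SDPO}}}$ appearing inside the verifier. Condition~(ii) is precisely what is needed so that $D_{\text{KL}}(\texttt{R}\Vert \texttt{V}_{\phi_0}) - D_{\text{KL}}(\texttt{R}\Vert \texttt{V}_{\hat{\phi}_{\text{SDPO}}}) \geq 0$, yielding the non-negative drift; without it the increment expectation could have either sign and the argument would fail. A secondary care point is the two-level structure (planning $t_h$ and action $a_h$), which I would handle by conditioning first on $s_h$ for the $t_h$-term and then further on $(s_h, t_h)$ for the $a_h$-term using the tower property, so that condition~(ii) can be invoked once in each layer.
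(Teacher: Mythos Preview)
Your proposal is correct and follows essentially the same route as the paper's proof: for part~1 you use the same measurability argument (your filtration $\mathcal{F}_h$ coincides with the paper's $\sigma(s_{h+1})$ since $s_{h+1}$ encodes the whole history), and for part~2 you set up $X_h = r_{\hat{\phi}_{\text{SDPO}}}(s_h)$, verify the submartingale property via the identity $\mathbb{E}_{x\sim p}[\log(q(x)/q_0(x))]=D_{\text{KL}}(p\Vert q_0)-D_{\text{KL}}(p\Vert q)$ together with condition~(ii), and then invoke the martingale convergence theorem under condition~(i) to force the increments below $\epsilon$. Your explicit mention of the tower property for the two-level $(t_h,a_h)$ conditioning is a nice clarification that the paper leaves implicit.
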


\begin{remark}
 [\textit{Discussion of the conditions in Theorem \ref{thm:formal version of finite stopping time}}] 
 Condition (i) assumes that the verifier is finite for paths generated by reasoner $\texttt{R}_{{\hat{\theta}_{\text{SFT}}}}$ in expectation. This can easily achieved when the verifier is finite.

Condition (ii) can be interpreted as assuming that the distance between $\texttt{R}_{{\hat{\theta}_{\text{SFT}}}}$ and $\texttt{V}_{{\phi_{0}}}$ is greater than the distance between $\texttt{R}_{{\hat{\theta}_{\text{SFT}}}}$ and $\texttt{V}_{{\hat{\phi}_{\text{SDPO}}}}$. This assumption is reasonable, as both $\texttt{R}_{{\hat{\theta}_{\text{SFT}}}}$ and $\texttt{V}_{{\hat{\phi}_{\text{SDPO}}}}$ align with human preferences and therefore tend to be closer to each other.

\end{remark}

\begin{proof}
[\textbf{Proof of Theorem \ref{thm:formal version of finite stopping time}}]
1. Let $\mathcal{F}_h$ be the smallest $\sigma$-field generated by $s^{}_{h+1}$, i.e., $\mathcal{F}_h = \sigma\left(s^{}_{h+1}\right)$. Observe that by the definition of the stopping rule as shown in equation (\ref{eq:stopping rule}), $\left\{H_{\tau}\left(\hat{\phi}_{\text{SDPO}}, \phi_0; \hat{\theta}_{\text{SFT}}\right) =h\right\}$ is determined by $s^{}_{h+1}$. Hence $\left\{H_{\tau}\left(\hat{\phi}_{\text{SDPO}}, \phi_0; \hat{\theta}_{\text{SFT}}\right) =h\right\}\in \mathcal{F}_h$, which implies that $H_{\tau}\left(\hat{\phi}_{\text{SDPO}}, \phi_0; \hat{\theta}_{\text{SFT}}\right)$ is a stopping time.

2. By the definition of $r_{\hat{\phi}_{\text{SDPO}}}$, we have 
\begin{align*}
   &\mathbb{E}\left[r_{\hat{\phi}_{\text{SDPO}}}(s_{h+1})\mid s_{h}\right]\\
   =& r_{\hat{\phi}_{\text{SDPO}}}(s_{h}) +  \mathbb{E}_{t_{h}^{}\sim \texttt{R}_{\hat{\theta}_{\text{SFT}}}\left(\cdot\mid s_{h}^{}\right), a_{h}^{}\sim \texttt{R}_{\hat{\theta}_{\text{SFT}}}\left(\cdot\mid t_{h}^{}\right), o_h^{}= f_{a^{}_h} \left(t^{}_h\right) }\left[\log \frac{\texttt{V}_{\hat{\phi}_{\text{SDPO}}}\left( t_{h}^{}\mid s_{h}^{}\right)}{\texttt{V}_{\phi_0}\left( t_{h}^{}\mid s_{h}^{}\right)} + \log \frac{\texttt{V}_{\hat{\phi}_{\text{SDPO}}}\left( a_{h}^{}\mid t_{h}^{}\right)}{\texttt{V}_{\phi_0}\left( a_{h}^{}\mid t_{h}^{}\right)} \right].
\end{align*}
    Observe that condition (ii) implies that
\begin{align*}
   &\mathbb{E}_{t_{h}^{}\sim \texttt{R}_{\hat{\theta}_{\text{SFT}}}\left(\cdot\mid s_{h}^{}\right)} \left[\log \frac{\texttt{V}_{\hat{\phi}_{\text{SDPO}}}\left( t_{h}^{}\mid s_{h}^{}\right)}{\texttt{V}_{\phi_0}\left( t_{h}^{}\mid s_{h}^{}\right)}\right]\\
   =& D_{\text{KL}}\left(\texttt{R}_{{\hat{\theta}_{\text{SFT}}}}\left( \cdot\mid s^{}_{h}\right)|| \texttt{V}_{{{\phi}_{0}}}\left( \cdot\mid s^{}_{h}\right)\right)-  D_{\text{KL}}\left(\texttt{R}_{{\hat{\theta}_{\text{SFT}}}}\left( \cdot\mid s^{}_{h}\right)|| \texttt{V}_{{\hat{\phi}_{\text{SDPO}}}}\left( \cdot\mid s^{}_{h}\right)\right)\\
   \geq& 0,
\end{align*}
and
\begin{align*}
   &\mathbb{E}_{a_{h}^{}\sim \texttt{R}_{\hat{\theta}_{\text{SFT}}}\left(\cdot\mid t_{h}^{}\right)} \left[\log \frac{\texttt{V}_{\hat{\phi}_{\text{SDPO}}}\left( a_{h}^{}\mid t_{h}^{}\right)}{\texttt{V}_{\phi_0}\left( a_{h}^{}\mid t_{h}^{}\right)}\right] \\
   =& \mathbb{E}_{a_{h}^{}\sim \texttt{R}_{\hat{\theta}_{\text{SFT}}}\left(\cdot\mid t_{h}^{}\right)}\left[\log \frac{\texttt{V}_{\hat{\phi}_{\text{SDPO}}}\left( a_{h}^{}\mid t_{h}^{}\right)}{\texttt{R}_{\hat{\theta}_{\text{SFT}}}\left( a_{h}^{}\mid t_{h}^{}\right)} \right] + \mathbb{E}_{a_{h}^{}\sim \texttt{R}_{\hat{\theta}_{\text{SFT}}}\left(\cdot\mid t_{h}^{}\right)} \left[\log \frac{\texttt{R}_{\hat{\theta}_{\text{SFT}}}\left( a_{h}^{}\mid t_{h}^{}\right)}{\texttt{V}_{\phi_0}\left( a_{h}^{}\mid t_{h}^{}\right)}\right]\\
   =& D_{\text{KL}}\left(\texttt{R}_{{\hat{\theta}_{\text{SFT}}}}\left( \cdot\mid t^{}_h\right)|| \texttt{V}_{{{\phi}_{0}}}\left( \cdot\mid t^{}_h\right)\right)-  D_{\text{KL}}\left(\texttt{R}_{{\hat{\theta}_{\text{SFT}}}}\left( \cdot\mid t^{}_h\right)|| \texttt{V}_{{\hat{\phi}_{\text{SDPO}}}}\left( \cdot\mid t^{}_h\right)\right)\\
   \geq& 0,
\end{align*}
hence $\mathbb{E}\left[r_{\hat{\phi}_{\text{SDPO}}}(s_{h+1})\mid s_{h}\right]\geq r_{\hat{\phi}_{\text{SDPO}}}(s_{h})$,
which indicates that $r_{\hat{\phi}_{\text{SDPO}}}(s_h)$ is a submartingale. 

Then combining condition (i) and Martingale convergence theorem \ref{thm:Martingale convergence theorem}, $r_{\hat{\phi}_{\text{SDPO}}}(s_h)$ converges a.s. to a limit $\tilde{r}(s_1)$ as $h\rightarrow \infty$. Hence for  $\epsilon >0$, with probability 1, there exists a $H_{\epsilon}(s_1)$ such that $\big|r_{\hat{\phi}_{\text{SDPO}}}(s_{h+1}) - r_{\hat{\phi}_{\text{SDPO}}}(s_h)\big| < \epsilon$ when $h>H_{\epsilon}(s_1)$. Therefore, $H_{\tau}\left(\hat{\phi}_{\text{SDPO}}, \phi_0; \hat{\theta}_{\text{SFT}}\right)\leq H_{\epsilon}(s_1)$ with probability 1, as desired.

\end{proof}

\newpage
\section{Experimental Details}
\label{append:exp}
\begin{table}[htbp]
  \centering
  \small
  % \scriptsize
  \newcolumntype{C}{>{\centering\arraybackslash}p{9mm}}
  \caption{\textbf{Hyperparameters for training Qwen2-VL-7B-Instruct \& Qwen2.5-VL-7B-Instruct \& LLaMA-3.2-11B-Vision-Instruct models}}
  \label{tab:hyperparams}
  \begin{tabular}{@{}l l@{}}
    \toprule
    \textbf{Hyperparameter}                     & \textbf{Value}                   \\
    \midrule
    LoRA Rank                                   & 8                              \\
    LoRA $\alpha$                               & 16                              \\
    LoRA Dropout                                & 0                              \\
    LoRA Target                                 & all                              \\
    GPU                                         & 8 $\times$ NVIDIA A800           \\
    Per Device Train Batch Size                 & 1                                \\
    Gradient Accumulation Steps                 & 1                                \\
    Warmup Ratio                                & 0.03                             \\
    Learning Rate                               & 3e-5                            \\
    Learning Rate Scheduler                     & Cosine                           \\
    Unfreeze Vision Tower                       & False                          \\
    Number Train Epoch                          & 1                              \\
    Max Gradient Norm                           & 1.0                             \\
    bf16                            
    & True                            \\
    Cut Off Length                                & 65536                           \\
    
    \bottomrule
  \end{tabular}
  \label{appendix:training_details_qwen}
\end{table}

\newpage
\section{Examples}
\subsection{VTS-DPO Dataset Examples}
\begin{figure}[htbp]
  \centering
  \includegraphics[width=0.9\textwidth,keepaspectratio]{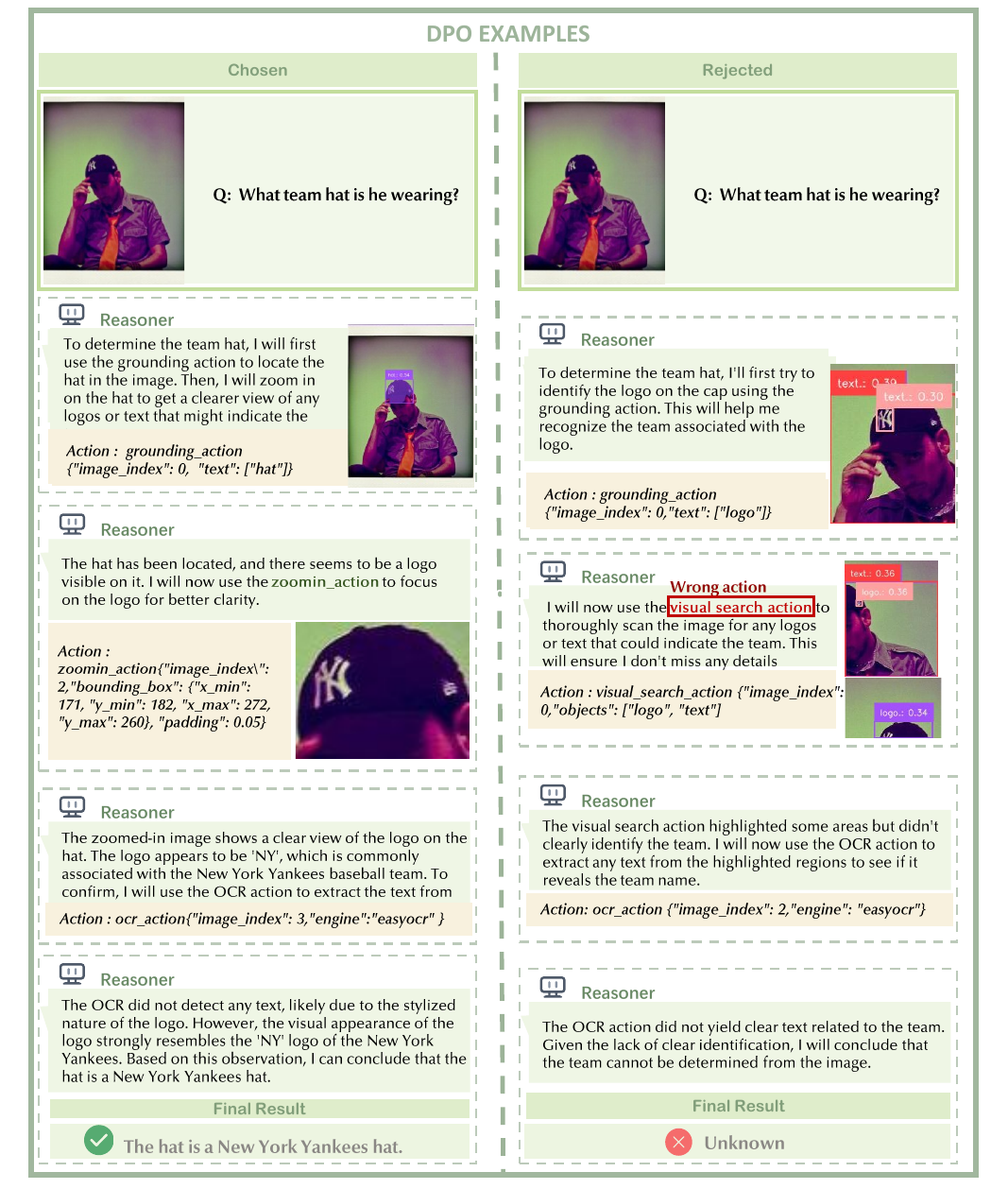}
  \caption{Examples of the generated DPO data.}
  \label{fig:dpo_3}
\end{figure}

\begin{figure}[htbp]
  \centering
  \includegraphics[width=0.9\textwidth,keepaspectratio]{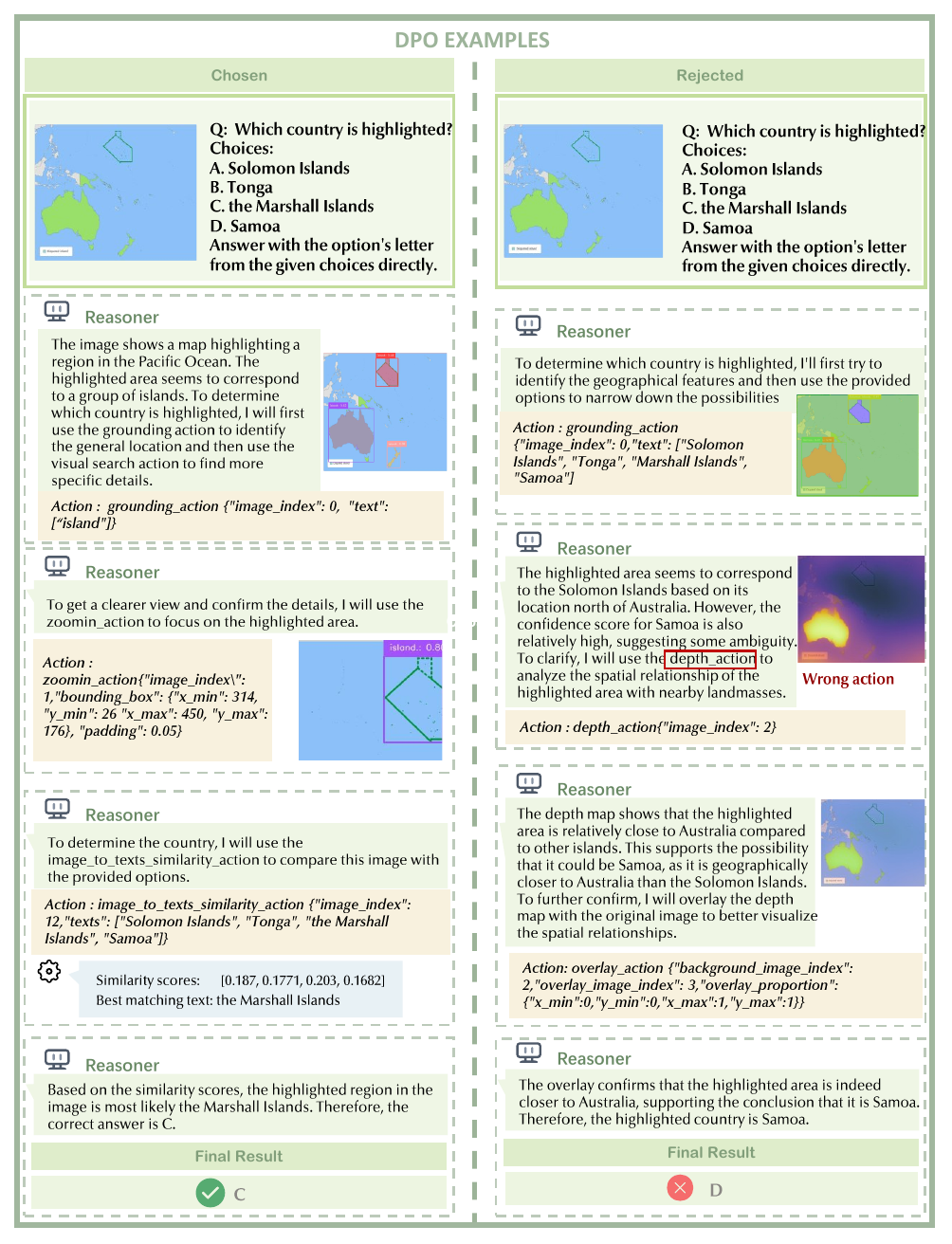}
  \caption{Additional examples of the generated DPO data.}
  \label{fig:dpo_4}
\end{figure}

\begin{figure}[htbp]
  \centering
  \includegraphics[width=0.9\textwidth,keepaspectratio]{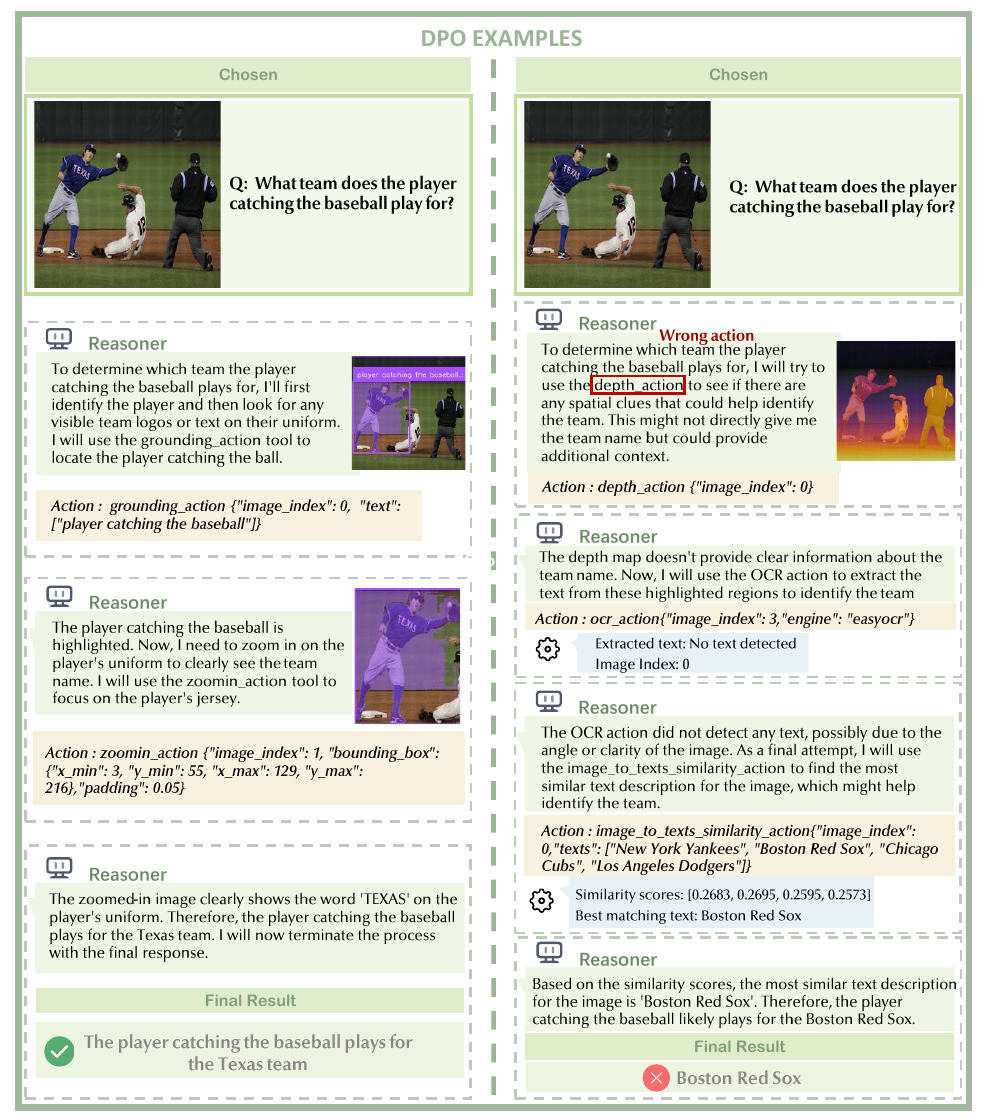}
  \caption{Additional examples of the generated DPO data.}
  \label{fig:dpo_5}
\end{figure}

\newpage

\subsection{VTS-SFT Dataset Examples}
\begin{figure}[H]
  \centering
  \includegraphics[width=0.95\textwidth,keepaspectratio]{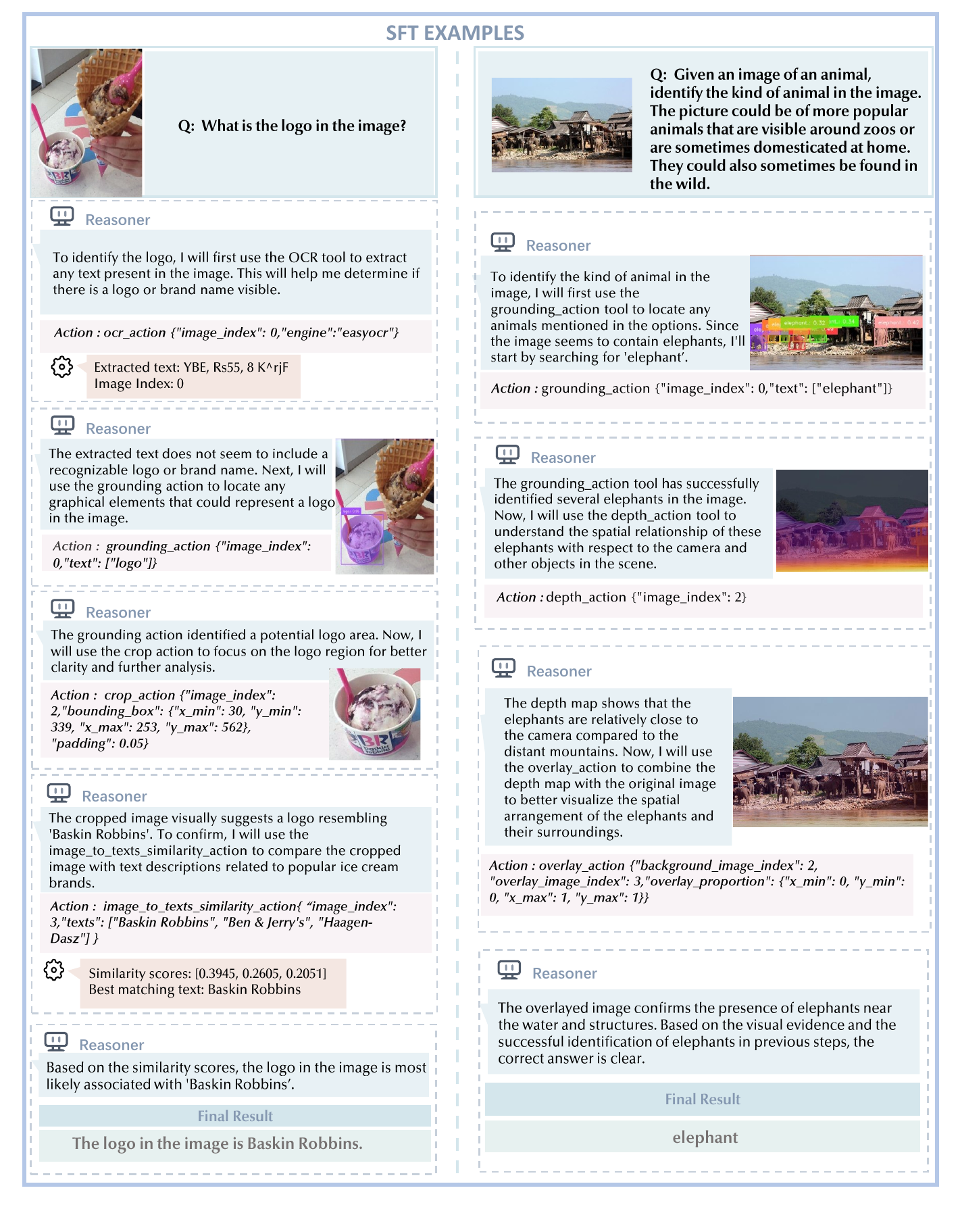}
  \caption{Examples of generated SFT data.}
  \label{fig:sft_1}
\end{figure}

\begin{figure}[htbp]
  \centering
  \includegraphics[width=0.95\textwidth,keepaspectratio]{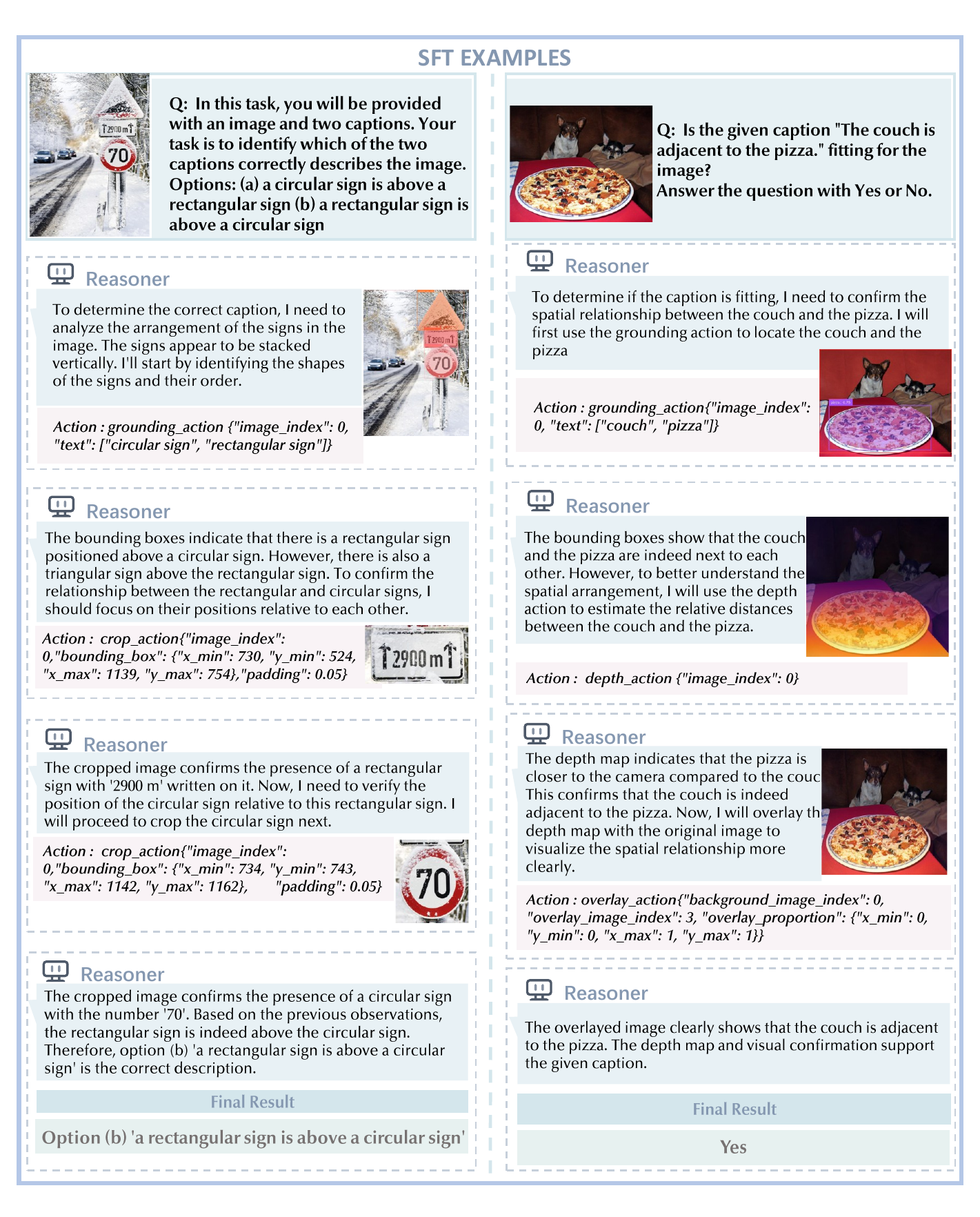}
  \caption{Additional examples of generated SFT data.}
  \label{fig:sft_2}
\end{figure}

\begin{figure}[htbp]
  \centering
  \includegraphics[width=0.95\textwidth,keepaspectratio]{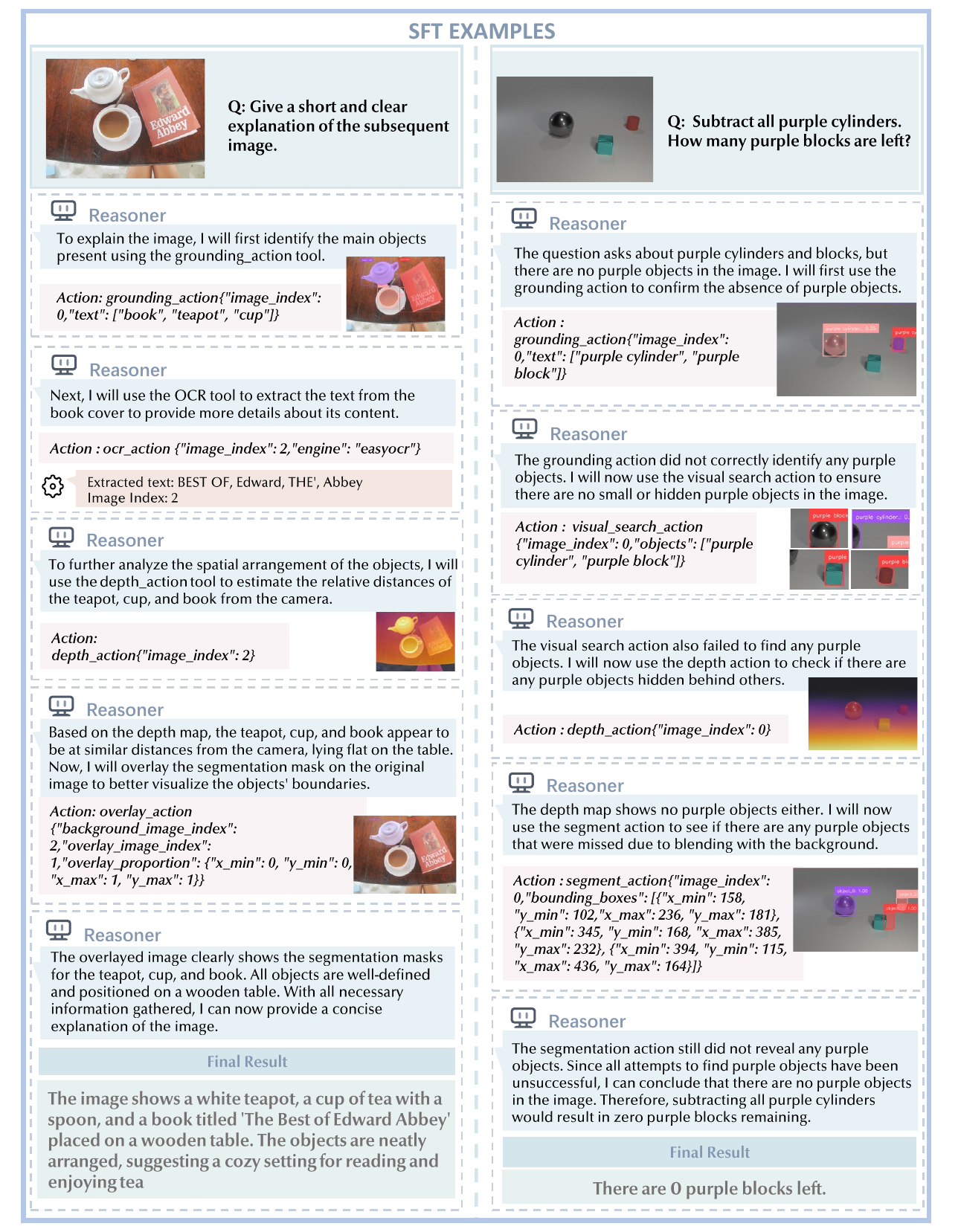}
  \caption{Additional examples of generated SFT data.}
  \label{fig:sft_5}
\end{figure}

\newpage
\subsection{Benchmark Examples}
\begin{figure}[H]
  \centering
\includegraphics[width=\textwidth,height=\textheight,keepaspectratio]{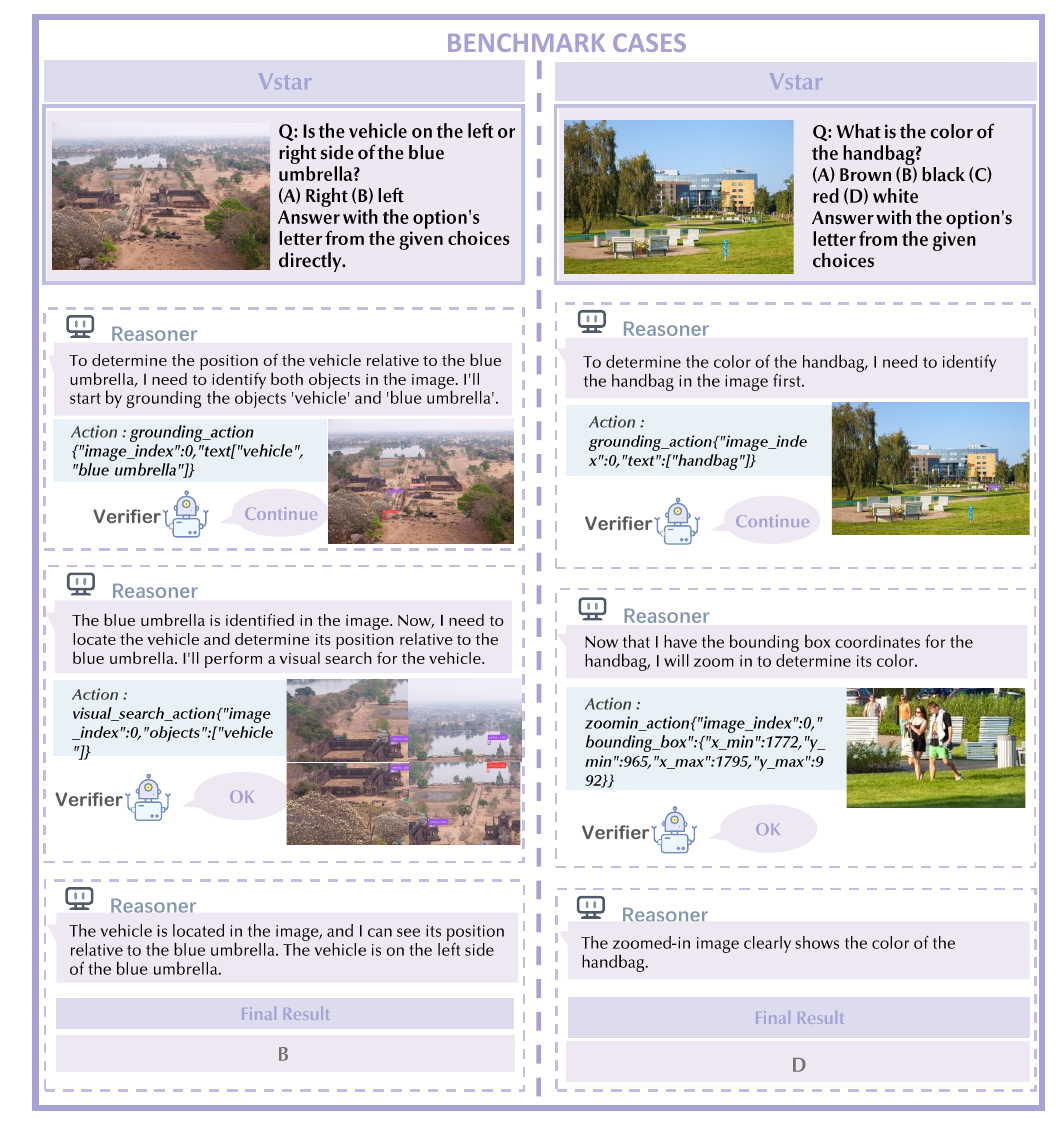}
  \label{fig:benchmark_3}
  \caption{We demonstrate examples of reasoning on benchmark data using VTS-V.}
\end{figure}
\begin{figure}[H]
  \centering
\includegraphics[width=\textwidth,height=\textheight,keepaspectratio]{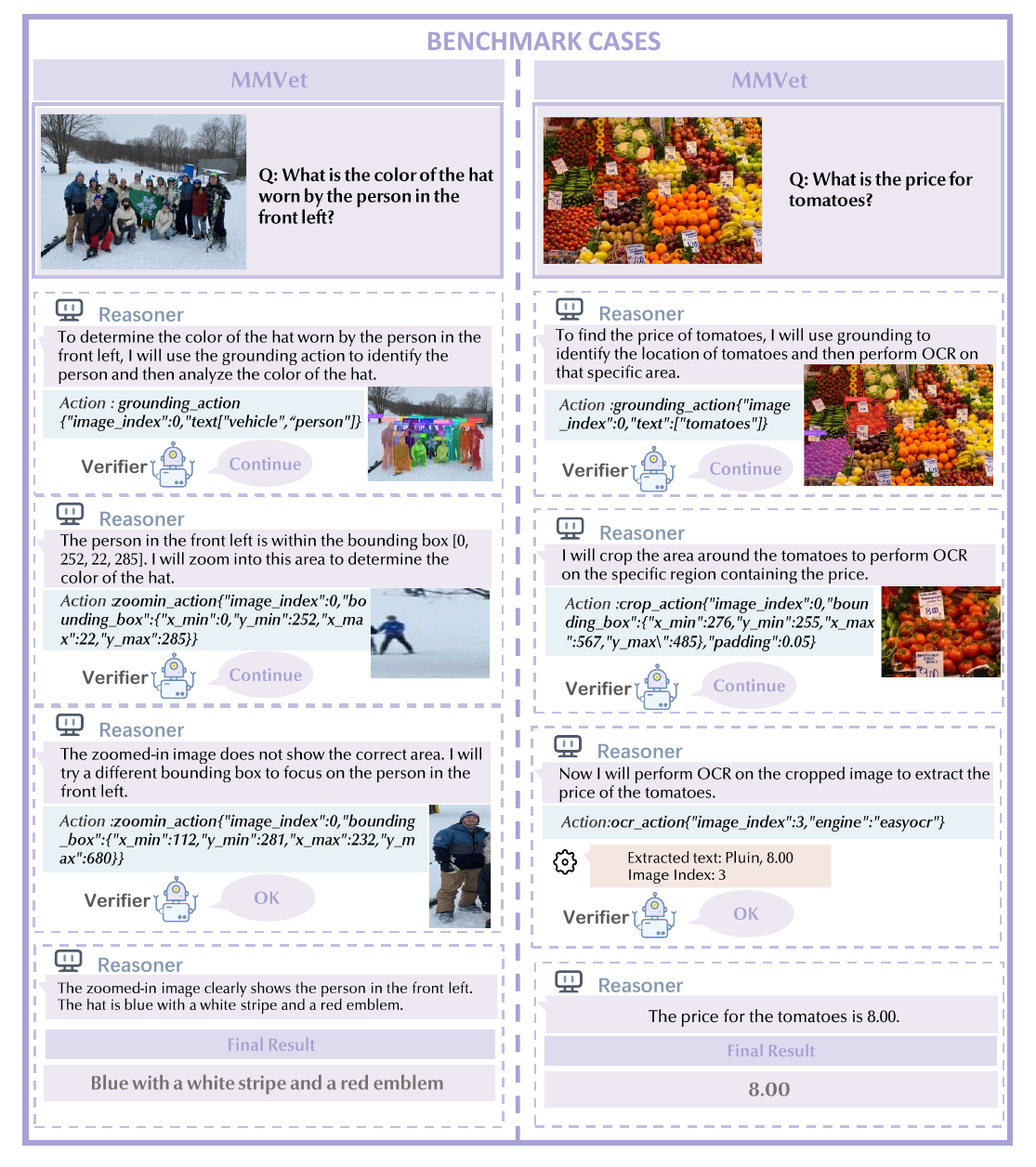}
  \label{fig:benchmark_4}
  \caption{We demonstrate examples of reasoning on benchmark data using VTS-V.}
\end{figure}
\begin{figure}[H]
  \centering
\includegraphics[width=\textwidth,height=\textheight,keepaspectratio]{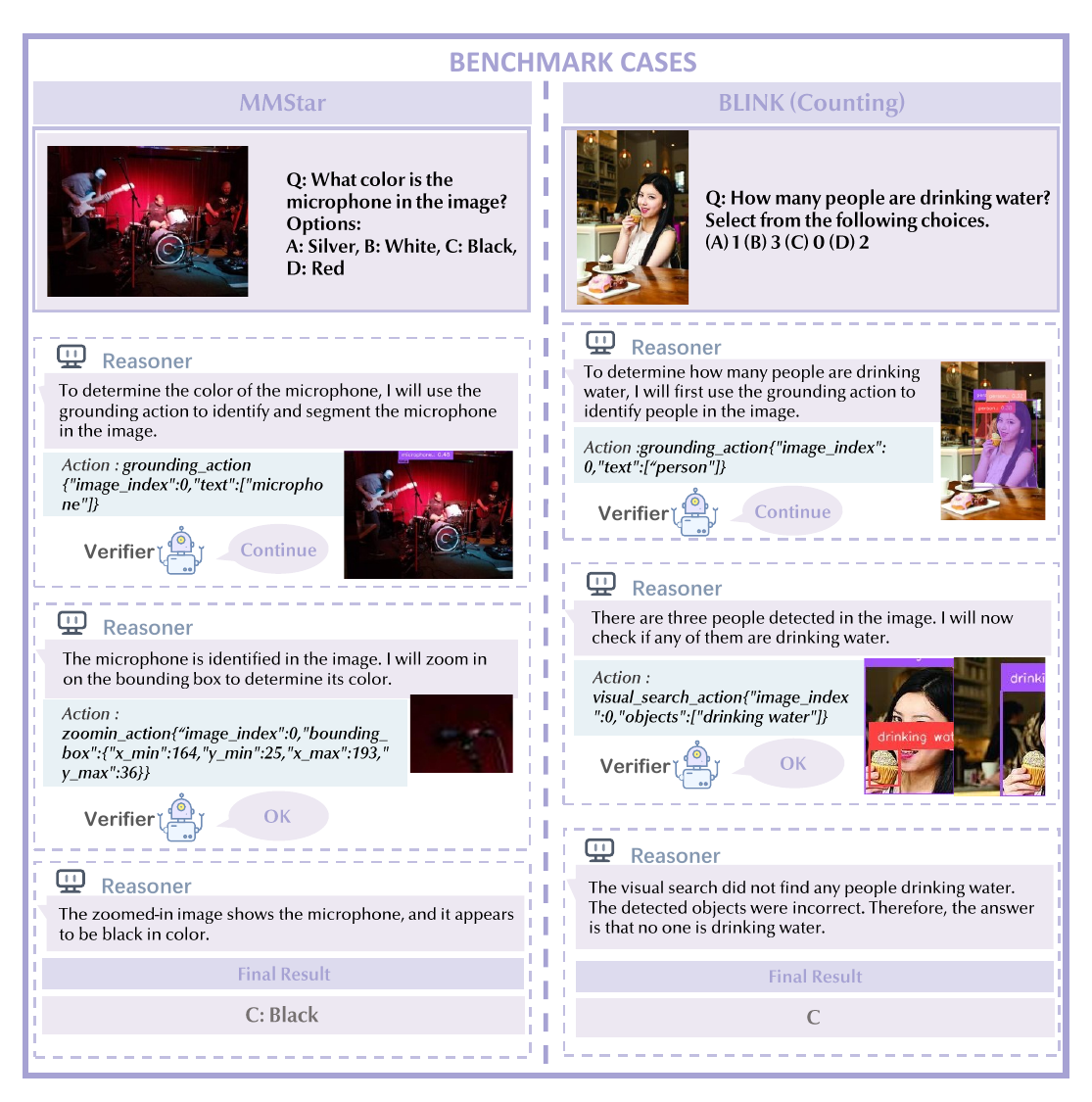}
  \caption{We demonstrate examples of reasoning on benchmark data using VTS-V.}
  \label{fig:benchmark_5}
\end{figure}

\end{document}